\documentclass[11pt]{article}

\usepackage{dsfont}

\def\colorful{0}

\oddsidemargin=-0.1in \evensidemargin=-0.1in \topmargin=-.5in
\textheight=9in \textwidth=6.5in
\parindent=18pt

\usepackage{amsmath,amsthm,amsfonts,amssymb,epsfig,color,float,graphicx,verbatim, enumitem}
\usepackage{multirow}
\usepackage{algorithm}
\usepackage[noend]{algpseudocode}
\usepackage{bbm}

\newif\ifhyper\IfFileExists{hyperref.sty}{\hypertrue}{\hyperfalse}
\hypertrue
\ifhyper\usepackage{hyperref}\fi

\usepackage{enumitem}

\usepackage{framed}
\usepackage{nicefrac}

\def\nnewcolor{1}
\ifnum\nnewcolor=1

\fi
\ifnum\nnewcolor=0

\fi

\ifnum\colorful=1

\else

\fi

\newtheorem{theorem}{Theorem}[section]

\newtheorem{assumption}[theorem]{Assumption}

\newtheorem{lemma}[theorem]{Lemma}
\newtheorem{informal theorem}[theorem]{Theorem (informal statement)}

\newtheorem{proposition}[theorem]{Proposition}
\newtheorem{corollary}[theorem]{Corollary}

\newtheorem{fact}[theorem]{Fact}

\theoremstyle{definition}
\newtheorem{definition}[theorem]{Definition}
\newcommand{\eqdef}{\stackrel{{\mathrm {\footnotesize def}}}{=}}

\theoremstyle{problem}
\newtheorem{problem}[theorem]{Problem}




\newcommand{\bx}{\mathbf{x}}
\newcommand{\by}{\mathbf{y}}
\newcommand{\bs}{\mathbf{s}}
\newcommand{\bt}{\mathbf{t}}

\newcommand{\bu}{\mathbf{u}}
\newcommand{\bw}{\mathbf{w}}

\newcommand{\tbx}{\tilde{\bx}}


\newcommand{\lwe}{\mathrm{LWE}}
\newcommand{\D}{\mathcal{D}}

\newcommand{\smp}{\mathrm{sample}}
\newcommand{\scr}{\mathrm{secret}}
\newcommand{\ns}{\mathrm{noise}}
\newcommand{\nsv}{z}


\newcommand{\Dgaus}{D^{\mathcal{N}}}
\newcommand{\Dexp}{D^{\mathrm{expand}}}
\newcommand{\Dcol}{D^{\mathrm{collapse}}}
\renewcommand{\mod}{\mathrm{mod}}
\newcommand{\gaus}{\mathcal{N}}

\newcommand{\negl}{\mathrm{negl}}

\newcommand{\LTF}{\mathrm{LTF}}

\newcommand{\idt}{\mathbf{1}}

\newcommand{\bv}{\mathbf{v}}

\newcommand{\bX}{\bx}

\newcommand{\R}{\mathbb{R}}
\newcommand{\Z}{\mathbb{Z}}
\newcommand{\N}{\mathbb{N}}
\newcommand{\E}{\mathbf{E}}

\newcommand{\eps}{\epsilon}

\newcommand{\pr}{\mathbf{Pr}}

\newcommand{\poly}{\mathrm{poly}}


\newcommand{\sgn}{\mathrm{sign}}
\newcommand{\sign}{\mathrm{sign}}

\newcommand{\opt}{\mathrm{OPT}}

\newcommand{\relu}{\mathrm{ReLU}}

\title{Near-Optimal Cryptographic Hardness of Agnostically Learning Halfspaces 
and ReLU Regression under Gaussian Marginals}

\author{
	Ilias Diakonikolas\thanks{Supported by NSF Medium Award CCF-2107079,
NSF Award CCF-1652862 (CAREER), a Sloan Research Fellowship, and
a DARPA Learning with Less Labels (LwLL) grant.}\\
	UW Madison\\
	{\tt ilias@cs.wisc.edu}\\
	\and
	Daniel M. Kane\thanks{Supported by NSF Medium Award CCF-2107547,
NSF Award CCF-1553288 (CAREER), a Sloan Research Fellowship, and a grant from CasperLabs.}\\
	UC San-Diego \\
	{\tt dakane@ucsd.edu}\\
	\and
	Lisheng Ren\thanks{Supported by NSF Award CCF-1652862 (CAREER) 
	and a DARPA Learning with Less Labels (LwLL) grant.}\\
	UW Madison\\
	{\tt lren29@wisc.edu}\\
}

\begin{document}

\maketitle

\setcounter{page}{0}

\thispagestyle{empty}

\begin{abstract}
We study the task of agnostically learning halfspaces under the Gaussian distribution.
Specifically, given labeled examples $(\bx,y)$ from an unknown distribution on $\R^n \times \{ \pm 1\}$, 
whose marginal distribution on $\bx$ is the standard Gaussian and the labels $y$ can be arbitrary, 
the goal is to output a hypothesis with 0-1 loss $\opt+\eps$, where $\opt$ is the 0-1 loss of the best-fitting halfspace.
We prove a near-optimal computational hardness result for this task, under the  widely believed 
sub-exponential time hardness of the Learning with Errors (LWE) problem. Prior hardness results are either
qualitatively suboptimal or apply to restricted families of algorithms. Our techniques extend to 
yield near-optimal lower bounds for related problems, including ReLU regression.
\end{abstract}

\newpage

\section{Introduction}

A halfspace or Linear Threshold Function (LTF) is any Boolean-valued 
function $f: \R^n \to \{ \pm 1\}$ of the form
$f(\bx) = \sgn \left(\langle \bw, \bx \rangle - t \right)$,
where $\bw \in \R^n$ is called the weight vector and $t \in \R$ is called the threshold.
Here the univariate function $\sign: \R \to \{ \pm 1\}$ is defined as $\sgn(u)=1$ 
if $u \geq 0$ and $\sgn(u)=-1$ otherwise.
The task of learning an unknown halfspace 
is a classical problem in machine learning that has been extensively studied since the 1950s,
starting with the Perceptron algorithm~\cite{Rosenblatt:58}, and
has lead to practically important techniques such as SVMs~\cite{Vapnik:98}
and AdaBoost~\cite{FreundSchapire:97}. In the realizable setting~\cite{Valiant:84},
halfspaces are known to be efficiently learnable (see, e.g.,~\cite{MT:94}) 
without distributional assumptions. 
In contrast, in the distribution-free agnostic model~\cite{Haussler:92, KSS:94},
even {\em weak} learning is computationally hard~\cite{GR:06, FGK+:06short, Daniely16, Tiegel22}.
Due to this computational intractability, a significant branch of 
research has focused on agnostically learning halfspaces in the {\em distribution-specific} setting. 
Intuitively, the underlying structure of the data distribution can potentially be 
leveraged to obtain non-trivial efficient algorithms robust to adversarial label noise.

Here we focus on the well-studied task of agnostically learning halfspaces 
{\em when the underlying distribution on examples is assumed to be Gaussian}.
That is, we are given i.i.d.\ samples 
from a joint distribution $D$ on labeled examples $(\bx, y)$, 
where $\bx \in \R^n$ is the example and $y \in \R$ is the corresponding label, 
and the goal is to compute a hypothesis that is competitive with the best-fitting halfspace.
Moreover, we assume that the marginal $D_{\bx}$ on $\R^n$ 
is the standard Gaussian $\mathcal{N}(\mathbf{0}, \mathbf{I})$.
As we will explain subsequently, the distributional assumption
makes the learning problem computationally easier, 
as compared to the distribution-free setting. Interestingly, 
{\em even the Gaussian version of the problem
exhibits information-computation tradeoffs that we explore --- 
and essentially resolve --- in this paper.}

For concreteness, we introduce some notation followed by the
definition of the aforementioned problem.
For a boolean-valued hypothesis $h:\R^n\to \{\pm 1\}$ 
and a distribution $D$ supported on $\R^n \times \{\pm 1\}$,
we use $R_{0-1}(h; D)$ to denote the 0-1 error of $h$ with respect to $D$, i.e., 
$R_{0-1}(h; D) \eqdef \pr_{(\bx,y)\sim D}[h(\bx)\neq y]$.
For a class $\mathcal{C}$ of boolean-valued functions on $\R^n$, 
we use $R_{0-1}(\mathcal{C}; D)$ to denote the minimum 0-1 error 
of any $h \in \mathcal{C}$, i.e., 
$R_{0-1}(\mathcal{C}; D)  \eqdef \min_{h \in \mathcal{C}}R_{0-1}(h; D)$.

\begin{problem} [Agnostically Leaning Halfspaces under Gaussian Marginals]
\label{prob:LTF}
Let $\mathrm{LTF}$ be the class of halfspaces on $\R^n$.
Given an error parameter $0< \eps < 1$ and i.i.d.\ samples $(\bx, y)$ 
from a distribution $D$ on $\R^n \times \{\pm 1\}$, where
the marginal $D_{\bx}$ on $\R^n$ is the standard Gaussian $\mathcal{N}(\mathbf{0}, \mathbf{I})$
and no assumptions are made on the labels $y$, 
the goal of the learning algorithm $\mathcal{A}$ is to output a hypothesis $h: \R^n \to \{\pm 1\}$ 
such that $R_{0-1}(h; D) \leq R_{0-1}(\mathrm{LTF}; D) +\eps$ with high probability.
We will say that the algorithm $\mathcal{A}$ agnostically learns halfspaces (or LTFs) 
under Gaussian marginals to additive error $\eps$.
\end{problem}


\paragraph{Prior Work on Problem~\ref{prob:LTF}}
By standard results~\cite{Haussler:92, KSS:94}, it follows that the sample complexity 
of the agnostic learning problem for halfspaces is $O(n/\eps^2)$.
The $L_1$-regression algorithm of~\cite{KKMS:08} solves Problem~\ref{prob:LTF}
with sample complexity and running time $n^{O(1/\eps^2)}$~\cite{DGJ+10:bifh, DKN10}.
While the $L_1$-regression algorithm is not proper, recent work developed
a proper learner with qualitatively similar sample and time complexities 
(namely, $n^{\poly(1/\eps)}$)~\cite{DKKT21}.
Importantly, the $L_1$-regression algorithm remains
the most efficient known algorithm for the problem. 

Given the gap between the sample complexity of the problem 
and the complexity of known algorithms,
it is natural to ask whether the limitations of known efficient algorithms are inherent. 
There are two general approaches to establish {\em information-computation} tradeoffs 
for statistical problems. One approach focuses on restricted families 
of algorithms (e.g., Statistical Query algorithms or low-degree polynomial tests).
It should be noted that such results do not have any implications 
for the family of all polynomial-time algorithms. Another, arguably more convincing approach, 
is via efficient reductions from known (average-case) hard problems. 
This is the approach we adopt in this work.

Returning to Problem~\ref{prob:LTF}, 
a line of work~\cite{GGK20-agnostic-SQ, DKZ20, DKPZ21} 
has established tight hardness in the Statistical Query (SQ) model.
SQ algorithms~\cite{Kearns:98} 
are a class of algorithms that are only allowed 
to query expectations of bounded functions of the distribution 
rather than directly access samples.
\cite{DKPZ21} showed that any SQ algorithm for the problem
either requires $2^{n^{\Omega(1)}}$ queries
or at least one query of very high accuracy 
(suggesting a sample complexity lower bound of $n^{\Omega(1/\eps^2)}$).
Interestingly, it is known (see, e.g.,~\cite{DFTWW15}) that the $L_1$-regression algorithm can
be efficiently implemented in the SQ model. However,
since the SQ model is restricted, this SQ lower
bound has no implications for general efficient algorithms.

Prior to the our work, the only known {\em computational } hardness 
for Problem~\ref{prob:LTF} is due to Klivans and Kothari~\cite{KlivansK14}.
That work gave a reduction from the problem of 
learning sparse parities with noise to Problem~\ref{prob:LTF}.
Under the plausible assumption that learning $k$-sparse parities with noise over $\{0, 1\}^n$ 
requires time $n^{\Omega(k)}$, the reduction of \cite{KlivansK14} implies
a computational lower bound of $n^{\Omega(\log(1/\eps))}$ for Problem~\ref{prob:LTF}.
Interestingly, this lower bound cannot be improved in the sense that
the corresponding hard instances can be solved in time $n^{O(\log(1/\eps))}$.

Finally, we note that for the qualitatively weaker error guarantee
of $C \cdot \opt+\eps$, for a sufficiently large universal
constant $C>1$, $\poly(d/\eps)$ time algorithms 
are known~\cite{ABL17, Daniely15, DKS18a}.

In summary, the best known algorithm for Problem~\ref{prob:LTF}
has sample complexity and running time $n^{\poly(1/\eps)}$, while the best known
computational hardness result gives an $n^{\Omega(\log(1/\eps))}$ lower bound.
Moreover, a tight lower bound is known for the restricted class of SQ algorithms.
This raises the following natural question:
\begin{center}
{\em Can we establish a near-optimal {\em computational hardness} result for Problem~\ref{prob:LTF}?}
\end{center}
In this paper, we answer this question in the affirmative by exhibiting 
a computational hardness reduction from a classical cryptographic problem,
showing that current algorithms are essentially best possible.  
Specifically, we prove a complexity lower bound of $n^{\poly(1/\eps)}$ (Theorem~\ref{thm:inf-LTF}), 
assuming the widely believed sub-exponential hardness 
of the Learning with Errors (LWE) problem (Definition~\ref{def:general-lwe-one-dim}).

The task of learning halfspaces is as a special case
of the more general setting that the underlying function is of the form
$\sigma( \left(\langle \bw, \bx \rangle - t \right))$, where $\sigma: \R \to \R$ is
a univariate activation. If the activation is better behaved than 
the $\sign$ function, specifically if $\sigma$ is monotone and Lipschitz (aka the setting of 
Generalized Linear Models), then the learning problem can be easier computationally. 
Here we show that our techniques can be extended to prove near-optimal hardness for 
some of these cases as well. Specifically, we focus on the well-studied problem of ReLU regression.

A ReLU is any function 
$f: \R^n \to \R_+$ of the form
$f(\bx) = \relu \left(\langle \bw, \bx \rangle - t \right)$,
where $\bw \in \R^n$ is called the weight vector and $t \in \R$ is called the threshold.
The activation $\relu: \R \to \R_+$ is defined as $\relu(u)= \max \{ 0, u\}$.
ReLUs are the most commonly used activations in modern deep neural networks.
Moreover, finding the best-fitting ReLU with respect to square-loss
is a fundamental primitive in the theory of neural networks. 
A line of work studied this problem from the perspectives 
of both algorithms and lower bounds,
see, e.g.,~\cite{Mahdi17, GoelKKT17, MR18, GoelKK19, 
FCG20, DGKKS20, DKTZ22, ATV22}.
Similarly to the case of halfspaces, ReLU regression 
is efficiently solvable in the realizable
setting and computationally hard (even for weak error guarantees) 
in the distribution-independent
agnostic setting~\cite{MR18, DKMR22-agn}. 
Here we study the agnostic setting with Gaussian marginals.

Since ReLU regression is a real-valued task, 
we will require the analogous terminology.
For a real-valued hypothesis $h:\R^n\to \R$ 
and a distribution $D$ supported on $\R^n \times \{\pm 1\}$,
we use $R_{2}(h; D)$ to denote the $L_2^2$-error 
of $h$ with respect to $D$, i.e., 
$R_{2}(h; D) \eqdef \E_{(\bx,y) \sim D}[(h(\bx)-y)^2]$.
For a class $\mathcal{C}$ of real-valued functions on $\R^n$, 
we use $R_{2}(\mathcal{C}; D)$ to denote the minimum 
$L_2^2$-error of any $h \in \mathcal{C}$, i.e., 
$R_{2}(\mathcal{C}; D)  \eqdef \min_{h \in \mathcal{C}}R_{2}(h; D)$.

\begin{problem} [ReLU Regression under Gaussian Marginals]
\label{prob:ReLU}
Let $\mathrm{ReLU}$ be the class of ReLUs on $\R^n$ with weight vectors
in the set $\{\bw \in \R^n: \|\bw\|_2 \leq 1\}$.
Given an additive error parameter $0< \eps < 1$ and i.i.d.\ samples $(\bx, y)$ 
from a distribution $D$ on $\R^n \times \R$, where
the marginal $D_{\bx}$ on $\R^n$ 
is the standard Gaussian $\mathcal{N}(\mathbf{0}, \mathbf{I})$
and the labels $y$ are bounded, 
the goal of the learning algorithm $\mathcal{A}$ is to output a hypothesis $h: \R^n \to \R$ 
such that $R_{2}(h; D) \leq R_{2}(\mathrm{ReLU}; D) +\eps$ with high probability.
We will say that the algorithm $\mathcal{A}$ agnostically learns ReLUs
under Gaussian marginals to additive error $\eps$.
\end{problem}


\paragraph{Prior Work on Problem~\ref{prob:ReLU}}
While there is no black-box relation with Problem~\ref{prob:LTF}, the situation
for both problems is analogous.
\cite{DGKKS20} gave an algorithm for Problem~\ref{prob:ReLU} with
sample complexity and runtime $n^{\poly(1/\eps)}$. 
While $\poly(n/\eps)$ time algorithms are known 
with weaker guarantees~\cite{GoelKK19, DGKKS20,  DKTZ22},
the fastest known algorithm with $\opt+\eps$ error 
is the one of~\cite{DGKKS20}.
In terms of computational hardness,~\cite{GoelKK19} gave 
a reduction from sparse noisy parity implying a computational 
lower bound of $n^{\Omega(\log(1/\eps))}$ for Problem~\ref{prob:ReLU}.
In the restricted SQ model, (near-optimal) SQ lower bounds of $n^{\poly(1/\eps)}$
have been shown~\cite{GGK20-agnostic-SQ, DKZ20, DKPZ21}.

In summary, the best known algorithm for Problem~\ref{prob:ReLU}
has sample complexity and running time $n^{\poly(1/\eps)}$, while the best known
computational hardness result gives an $n^{\Omega(\log(1/\eps))}$ lower bound.
It is thus natural to ask whether {\em computational hardness} of 
$n^{\poly(1/\eps)}$ can be established. Similarly to the case of LTFs,
we prove such a statement (Theorem~\ref{thm:inf-ReLU}) 
under the sub-exponential hardness of LWE.

\subsection{Our Results and Techniques} \label{ssec:results}
We start with an informal definition of the LWE problem.
In the LWE problem,
we are given samples $(\bx_1, y_1), \dots, (\bx_m, y_m)$ 
and the goal is to distinguish between the following two cases:
\begin{itemize}[leftmargin=*]
\item Each $\bx_i$ is drawn uniformly at random (u.a.r.)\ from $\Z^n_q$, 
and there is a hidden secret vector $\bs \in \Z_q^n$ such that 
$y_i = \left<\bx_i, \bs\right> + \nsv_i$, 
where $\nsv_i \in \Z_q$ is discrete Gaussian noise (independent of $\bx_i$).
\item Each $\bx_i$ and each $y_i$ are independent 
and are sampled u.a.r. from $\Z_q^n$ and $\Z_q$ respectively.
\end{itemize}


Formal definitions of LWE (Definition~\ref{def:general-lwe-one-dim}) 
together with the precise computational hardness assumption (Assumption~\ref{asm:LWE-hardness}) we rely on 
are given in Section~\ref{sec:prelims}. 

For Problem~\ref{prob:LTF} we prove:

\begin{theorem}[Hardness of Agnostically Learning Gaussian Halfspaces]\label{thm:inf-LTF}
Assume that LWE cannot be solved in $2^{n^{1-\Omega(1)}}$ time. 
Then for any constants $c>0$ and $\alpha < 2$ the following holds: 
If $\eps \leq 1/\log^{1/2+c}(n)$,
any algorithm that agnostically learns LTFs on $\R^n$ 
with Gaussian marginals to additive error $\eps$ 
requires running time at least $\min \{ n^{\Omega(1/(\eps\sqrt{\log n})^\alpha)}, 2^{n^{0.99}} \}$.
\end{theorem}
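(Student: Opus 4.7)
The plan is to construct a polynomial-time reduction from decisional LWE (with appropriately chosen parameters) to Problem~\ref{prob:LTF}. Given samples from either the LWE distribution or the uniform distribution on $\Z_q^{n'} \times \Z_q$, the reduction produces labeled examples $(\bx, y) \in \R^n \times \{\pm 1\}$ with marginal $D_{\bx} = \gaus(\mathbf{0}, \bI)$. In the LWE case the constructed distribution $D^{\alt}$ admits a halfspace with 0-1 error at most some value $\opt$; in the uniform case every halfspace on $D^{\nul}$ has 0-1 error at least $\opt + \eps$. An agnostic learner for halfspaces with additive error $\eps$ running in time $T$ would then distinguish the two cases in time $T + \poly(n)$, contradicting the sub-exponential LWE hardness assumption for a judicious choice of $n'$ and $q$.

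The construction has two layers. First, I would design a one-dimensional joint distribution $P^{\alt}$ on $\R \times \{\pm 1\}$ whose $x$-marginal is $\gaus(0,1)$, such that (a) some univariate halfspace (a threshold on $\R$) achieves 0-1 error at most $\opt$ on $P^{\alt}$, and (b) $P^{\alt}$ agrees with its null counterpart $P^{\nul} = \gaus(0,1) \otimes \mathrm{Unif}\{\pm 1\}$ in all Hermite moments up to degree $k \asymp 1/\eps^2$. The natural realization uses a discrete Gaussian mixture over a scaled lattice $\beta \Z$: conditional on $y = +1$ the projection is concentrated on the positive lattice points, conditional on $y = -1$ on the negative ones, with weights chosen so that the unconditional mixture equals $\gaus(0,1)$ to moment order $k$. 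Second, I would embed $P^{\alt}$ along a hidden direction $\bv$ in $\R^n$: the projection $\langle \bv, \bx\rangle$ follows the $x$-marginal of $P^{\alt}$, the orthogonal component is independent $\gaus(0, \bI_{n-1})$, and the label is inherited from $P^{\alt}$. The cryptographic lift is the standard continuous-LWE construction: given an LWE sample $(\ba, b) \in \Z_q^{n'} \times \Z_q$ with $b \equiv \langle \ba, \bs\rangle + \nsv \pmod q$, convolve with a matching continuous Gaussian to produce a sample in $\R^{n'+1}$ with Gaussian marginal whose projection along $\bv_{\bs} = (-\bs, 1)/\sqrt{\|\bs\|^2+1}$ has the desired lattice-mixture distribution. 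The label is read off from an appropriate function of $b \bmod q$, and padding with $n - n' - 1$ independent $\gaus(0,1)$ coordinates lifts the dimension to $n$. When the input is uniform over $\Z_q^{n'} \times \Z_q$, the joint distribution matches $D^{\nul}$.

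The main obstacle I foresee is simultaneously achieving both requirements on $P^{\alt}$: a "signed" concentration in the conditional distributions is needed for a halfspace to achieve low error, which pushes toward asymmetric conditionals, while Hermite-moment matching of order $k \asymp 1/\eps^2$ pushes toward balanced, Gaussian-like distributions. Resolving this tension requires carefully tuning both the lattice spacing $\beta \asymp \eps$ and the conditional weight profiles; a Hermite/chi-squared expansion should then bound the distinguishing advantage of any halfspace between $D^{\alt}$ and $D^{\nul}$ by $\eps$. Finally, translating the reduction into the stated time lower bound $\min\{n^{\Omega(1/(\eps \sqrt{\log n})^\alpha)}, 2^{n^{0.99}}\}$ requires a careful calibration of $q$ (so that the LWE discrete Gaussian noise width matches the smoothing parameter of $\beta \Z$) and $n'$ (so that $2^{(n')^{1-\Omega(1)}}$ equals the target running time); the restrictions $c > 0$ and $\alpha < 2$ in the statement will absorb the logarithmic slack arising in both the moment-matching bound and the LWE-to-CLWE lift.
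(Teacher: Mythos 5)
Your high-level framework is right: reduce from LWE, smooth the discrete samples to produce a continuous distribution with a hidden direction, label via a function of the projection onto that direction, and invoke cryptographic indistinguishability to show that an agnostic learner would solve LWE. This matches the paper's plan (Proposition~\ref{pro:lwe-short-norm} followed by Theorem~\ref{thm:main-thm-ltf}). However, the core of your one-dimensional construction is off in a way that would make the argument fail, and the moment-matching ingredient you insist on is a conceptual misdirection.

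First, the moment-matching requirement (b) is spurious. You ask that $P^{\alt}$ match $\gaus(0,1)\otimes\mathrm{Unif}\{\pm1\}$ in Hermite moments up to degree $k\asymp 1/\eps^2$. That is the mechanism of an SQ (or low-degree) lower bound, where indistinguishability must be argued information-theoretically. In a cryptographic reduction the indistinguishability between the alternative and null cases is inherited \emph{for free} from the hardness of LWE/cLWE; there is nothing to argue about moments. In the paper's construction the marginal on $\bx$ is \emph{exactly} $\gaus(\mathbf 0,\bI)$ in both cases (this is precisely what the rejection-sampling step in Lemma~\ref{lem:lwe-gaussian-sample} buys, via Fact~\ref{fct:bound-on-collaped-distribution}), and the only difference is the joint law of $(\bx,y)$, whose computational indistinguishability is what cLWE asserts. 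The ``tension'' you identify between asymmetric conditionals and moment matching is an artifact of this unnecessary requirement, not an obstacle the proof must resolve.

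Second, the specific one-dimensional distribution you propose is wrong. If conditional on $y=+1$ the projection sits on positive lattice points and conditional on $y=-1$ on negative ones, then the unconditional marginal is supported on a discrete lattice (hence not Gaussian — only moment-matching, which violates the Gaussian-marginal requirement of Problem~\ref{prob:LTF}); and if you smooth it to repair the marginal, the left/right separation evaporates. Moreover the $\opt$ you would get is far too small: you need $\opt$ \emph{close} to $1/2$, since in the null case $\opt=1/2$ exactly. The paper instead sets $y=\sign(T/2 - \mod_T(\langle\bs,\bx\rangle+z))$, i.e., essentially the parity of $\lfloor\langle\bs,\bx\rangle/T\rfloor$. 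The conditionals given $y=\pm1$ are Gaussians restricted to the interleaved even/odd period intervals; no LTF can be right more than slightly over half the time, and one shows via a pair $h_1,h_2$ of thresholds that $R_{0-1}(h_1;D)+R_{0-1}(h_2;D)\le 1-\Omega(T)$, hence one of them achieves $1/2-\Omega(T)$. This gives exactly the weak-but-nonzero signal the agnostic setting demands.

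Finally, you compress the entire LWE-to-cLWE machinery into ``a careful calibration of $q$ and $n'$.'' In fact getting the exponent up to $\alpha<2$ rather than $\alpha<1$ is a real technical point: the paper replaces the secret distribution $\Z_q^{n'}$ by a $k$-sparse one and then by a small-norm continuous one, and along the way tightens the reduction of \cite{vinod2022} (Lemma~\ref{lem:lwe-gaussian-sample}) to remove a $\sqrt{\log m}$ blow-up in the secret norm. Without that, the best you could extract is a lower bound with $\alpha<1$ in the exponent, not the near-optimal $\alpha<2$.
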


Some comments are in order to interpret this statement.
The minimum of the two terms is necessary to handle the case
where $\eps$ is very small, specifically $\eps = \tilde{O}(1/\sqrt{n})$. 
(Since the problem can always be solved in time $2^{\tilde{O}(n)}$ 
via brute-force, the first term cannot be a time lower bound for this range of $\eps$.)
On the other hand, for $ \tilde{\Omega}(1/\sqrt{n})  = \eps  \leq 1/\log^{1/2+c}(n)$, 
Theorem~\ref{thm:inf-LTF} gives a time lower bound of 
$n^{\Omega(1/(\eps\sqrt{\log n})^\alpha)}$, for any constant $\alpha <2$.
This bound nearly matches the upper bound of $n^{O(1/\eps^2)}$~\cite{KKMS:08}, 
up to the $\sqrt{\log n}$ factor
in the exponent. Note that the extraneous factor of $\sqrt{\log n}$ 
is negligible if $\eps$ is sufficiently small.
For example, if $\eps \leq 1/\log n$, the implied lower bound is $n^{\Omega(1/\eps^\alpha)}$
for any constant $\alpha<1$. For $\eps  = O(n^{-c})$, for a small constant $c>0$, 
we get a lower bound of $n^{\tilde{\Omega}(1/\eps^\alpha)}$, for any constant $\alpha <2$.

For Problem~\ref{prob:ReLU} we prove:

\begin{theorem}[Hardness of Gaussian ReLU Regression]\label{thm:inf-ReLU}
Assume that LWE cannot be solved in $2^{n^{1-\Omega(1)}}$ time. 
Then for any constants $c>0$ and $\alpha < 1/2$ the following holds: 
If $\eps \leq 1/\log^{2+c}(n)$,
any algorithm for ReLU regression on $\R^n$ under Gaussian marginals
with additive error $\eps$ requires running time at least 
$\min \{ n^{\Omega(1/(\eps \log^2 n)^\alpha)}, 2^{n^{0.99}} \}$.
\end{theorem}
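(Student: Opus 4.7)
The plan is to prove Theorem~\ref{thm:inf-ReLU} by an LWE-based reduction that parallels the construction underlying Theorem~\ref{thm:inf-LTF}, but adapted for real-valued labels and squared loss. Concretely, I would exhibit a polynomial-time transformation that, given a stream of purported LWE samples on $\Z_q^{n'} \times \Z_q$ with hidden secret $\bs$ and discrete-Gaussian noise of width $\beta q$, outputs a stream of samples $(\bx, y) \in \R^n \times \R$ whose $\bx$-marginal is $\mathcal{N}(\mathbf{0}, \mathbf{I})$ and whose labels are bounded, such that: (i) in the planted LWE case there is a ReLU $h^\ast$ with weight vector of norm at most one achieving $R_2(h^\ast; D) \leq \tau$, and (ii) in the uniform LWE case every ReLU $h$ with $\|\bw\|_2 \leq 1$ has $R_2(h; D) \geq \tau + \eps$. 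An agnostic ReLU learner with additive error $\eps$ would then distinguish the two regimes, contradicting Assumption~\ref{asm:LWE-hardness}.

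The sample-transformation machinery for the $\bx$-coordinates is inherited from the LTF construction: a Gaussianization gadget maps each $\bx_i \in \Z_q^{n'}$ to a standard Gaussian vector in $\R^n$, while embedding the hidden secret $\bs$ into a low-dimensional ``planted'' unit direction $\bw^\ast$ in the new coordinates. The novel ingredient is the label construction. Rather than labeling by the sign of a periodic function of $y_i$, I use a bounded continuous template $\Psi:\R/\Z \to \R$ designed so that, in the planted case, $\E[\Psi(y_i/q)\mid \bx_i]$ matches $\relu(\langle \bw^\ast, \bx_i\rangle - t)$ (up to noise of variance $\tau$) for a chosen threshold $t$. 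Such a $\Psi$ can be built as the periodic lift of the ReLU shape, smoothed and truncated so that (a) its Fourier expansion in the label direction is essentially supported on one frequency, which is the frequency revealed by the LWE-planted samples, and (b) its output scale produces bounded labels, making it a legitimate instance of Problem~\ref{prob:ReLU}.

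The analytic claims are then twofold. In the planted case, the residual $y - \relu(\langle \bw^\ast, \bx\rangle - t)$ has variance of order $\tau \asymp \beta^2$ after appropriate rescaling, so the stated ReLU achieves $R_2(h^\ast; D) \leq \tau$. In the uniform case, the subexponential LWE assumption guarantees that $y$ is statistically indistinguishable from being independent of $\bx$ along the low-Hermite-degree subspace that any bounded ReLU is almost entirely supported on; decomposing an arbitrary competing ReLU into its constant plus low-degree Hermite projections and bounding the tail yields $R_2(h; D) \geq R_2(\text{const}; D) - o(\eps) \geq \tau + \eps$ by calibration. Hence any $\eps$-additive regression learner identifies the regime.

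The main obstacle I expect is the parameter calibration. Because squared loss is quadratic in the residual, the planted-vs-uniform gap in square loss scales like $\beta^2$ rather than $\beta$ (as it does for $0$-$1$ loss in the LTF reduction), so to realize an $\eps$-gap for ReLU regression one must use LWE noise width $\beta \approx \sqrt{\eps}$, up to $\polylog$ factors. Under the sub-exponential hardness Assumption~\ref{asm:LWE-hardness} this translates into a runtime lower bound of $n^{\Omega(1/\beta^{2\alpha})} = n^{\Omega(1/\eps^{\alpha})}$ for any $\alpha < 1/2$. Pinning down this quadratic-vs-linear loss dependence precisely, tracking the polylogarithmic slack so that the accumulated factors collapse into the stated $\log^2 n$, and simultaneously verifying that $\Psi$ admits a unit-norm ReLU realizer and produces bounded labels, is where the bulk of the careful work lies.
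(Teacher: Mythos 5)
Your proposal takes a genuinely different route from the paper, and it has a gap that the paper's argument is specifically engineered to avoid. The paper reuses the \emph{same binary-labeled} hard instance as for LTFs: samples $(\bx,y')$ from the small-norm cLWE instance of Proposition~\ref{pro:lwe-short-norm} are mapped to $(\bx, y)$ with $y = \pm 1$ according to whether $y' \le T/2$. The squared-loss gap is then extracted in two steps. Lemma~\ref{lem: relu-correlation} shows \emph{some} ReLU $h(\bx) = \relu(\langle \bs,\bx\rangle - t)$ with $t \ge 0$ has correlation $|\E[y\,h(\bx)]| = \Omega(T^2)$ with the binary label; the key device is the observation that $r(t)=\E[y\,\relu(\langle \bs,\bx\rangle - t)]$ has second derivative $r''(t)=-\Omega(1)$ throughout $[T/6, T/3]$, which forces a variation of $\Omega(T^2)$ and accounts for the Lipschitz penalty (correlation $T^2$ rather than $T$). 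Lemma~\ref{lem:correlation-implies-error} then converts a size-$\eps$ correlation into $L_2^2$ error $\le 1 - \eps^2$ using the elementary bound $\E[f(u)^2] \le \E[u^2] = 1$ valid because $t \ge 0$. Two squarings give a $T^4 = \Omega(1/(k\log n)^2)$ gap, and $k \approx 1/(\sqrt{\eps}\log n)$ plugged into the $n^{\Omega(k^\beta)}$, $\beta<1$ cLWE bound yields the stated $\alpha<1/2$. Your proposal instead produces real-valued labels $y = \Psi(y'/q)$ and aims to make $\E[\Psi(y'/q)\mid \bx]$ \emph{directly} match a ReLU. But $\E[\Psi(y'/q)\mid\bx]$ is necessarily a periodic function of $\langle \bs,\bx\rangle$ with period $T \ll 1$, while a ReLU is unbounded and non-periodic; any match can hold over at most one period, and outside it the residual is $\Theta(1)$, not $O(\tau)$. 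You do not address this, and your ``gap $\approx \beta^2$'' calibration conflates the cLWE noise width (which the paper takes to be $k^{-\kappa} = o(T)$ and which does \emph{not} control the gap) with the period $T$; in the paper's construction the gap is $T^4$, so realizing a gap of $\eps$ needs $T^2\approx \sqrt{\eps}$, i.e., $k\approx 1/(\sqrt{\eps}\log n)$, which is what makes the exponent collapse to $\alpha<1/2$ and why the extraneous factor in the theorem is $\log^2 n$.

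A second concern is the null-case argument. In Definition~\ref{def:general-lwe-one-dim}, the null hypothesis has $y'$ \emph{exactly} independent of $\bx$, so $\Psi(y'/q)$ is exactly independent of $\bx$, and every hypothesis --- in particular every bounded-weight ReLU --- has squared error at least $\var(y)$; there is nothing to prove. Invoking subexponential LWE indistinguishability and a low-Hermite-degree decomposition here is both unnecessary and a category error: the LWE assumption governs the hardness of distinguishing the planted regime from the null, not the internal structure of the null distribution itself.
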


Intuitively, the above statement says that any algorithm for Problem~\ref{prob:ReLU}
requires time at least $n^{(1/\eps)^{\Omega(1)}}$, if $\eps$ is sufficiently small 
(e.g., $\eps = O(1/\log^3n)$) and not too small (in which case the latter term dominates
the obvious brute-force algorithm). This runtime lower bound qualitatively matches the
upper bound of $n^{\poly(1/\eps)}$~\cite{DGKKS20} 
and exponentially improves on the best known computational 
lower bound of $n^{\Omega(\log(1/\eps))}$~\cite{GoelKK19}.


\subsection{Techniques} \label{ssec:tec}
Our computational hardness reductions build on two main ideas. 
The first idea is inspired by the approach of \cite{DKMR22}.
We note that \cite{DKMR22} established a hardness reduction
from LWE to {\em distribution-free} PAC learning halfspaces with Massart noise.
While the Massart noise model is technically easier than the adversarial label noise model,
here we are interested in the (much simpler)
regime where the marginal distribution is Gaussian.
Indeed, the results of  \cite{DKMR22} have no implications for the Gaussian setting.
Yet one of their ideas is useful in our context.

The key idea of \cite{DKMR22} is that by applying rejection sampling 
to a continuous variant of LWE supported on $\R^n$
(this variant was shown to be as hard as the 
standard LWE problem supported on $\Z_q^n$ in~\cite{vinod2022})
one obtains either 
(i) a standard Gaussian in the null hypothesis case or 
(ii) a distribution that is approximately a discrete Gaussian plus a little noise 
in a hidden direction and a standard Gaussian in the orthogonal directions in the alternative hypothesis case. 
By taking a mixture of such rejection sampling distributions, 
\cite{DKMR22} manage to produce a joint distribution on $(\bx,y)$ over $\R^n \times \{\pm 1\}$ such that: 
\begin{enumerate}[leftmargin=*]
\item[(i)] in the null hypothesis case, 
$y$ is independent of $\bx$, and
\item[(ii)] in the alternative hypothesis case\footnote{This leverages a 
construction of such a distribution from~\cite{DK20-hardness}.}, 
$y$ is given by a Polynomial Threshold Function (PTF) applied to $\bx$ with Massart noise.
\end{enumerate}
Given the above, \cite{DKMR22} conclude that
any learner for Massart halfspaces LTFs can be used
to distinguish between the alternative and null hypothesis cases, 
and thus solves the LWE problem.

In this paper, we apply a similar technique to the tasks of agnostically leaning halfspaces
and ReLUs under Gaussian marginals. A key difference in our setting is that we require the distribution 
of $\bx$ be the standard Gaussian --- a property inherently not satisfied by the aforementioned construction.
Roughly speaking, \cite{DKMR22} showed that it is LWE-hard to distinguish 
between a standard Gaussian and a distribution that is standard Gaussian 
in all directions except for a hidden direction in which 
it is approximately a specified mixture of discrete Gaussians plus a little noise. 
The learning application in \cite{DKMR22} was obtained via the construction 
of a PTF with Massart noise such that both the conditional distributions on $y=1$ 
and on $y=-1$ were such (noisy) mixtures of discrete Gaussians. 
In our context, we need to construct different pairs of such conditional distributions.

We do this as follows.
Let $\bx$ be sampled from a standard Gaussian and 
consider the function $f_{\bs}(\bx) = (-1)^{\lfloor\langle  \bx,\bs\rangle  \rfloor}$ 
for some unknown vector $\bs$ with relatively large norm. 
If we consider the distribution of $\bx$
conditioned on $f_{\bs}(\bx) = 1$, we obtain 
a distribution that is (i) Gaussian in the directions orthogonal to $\bs$, 
and (ii) a Gaussian conditioned on $\lfloor\langle  \bx,\bs\rangle  \rfloor$ being even 
in the $\bs$-direction. 
One can see that this is a mixture of discrete Gaussians.
The same can be argued for the distribution of $\bs$ conditioned on $f_{\bs}(\bx) = -1$. 
Thus, using the techniques described above, 
we can show that given labeled samples $(\bx,y)$ with $\bx$ a standard Gaussian, 
it is LWE-hard to distinguish between the cases that 
(i) $y$ is independent of $\bx$, and  
(ii) $y = f_{\bs}(\bx)$ for some unknown vector $\bs$.

This result forms the basis for our two learning applications. 
Specifically, for the problem of agnostically learning Gaussian LTFs, 
it is not hard to show that there exists an LTF $g$ 
such that $\E_{\bx \sim \mathcal{N}(\mathbf{0}, \mathbf{I})}[f_{\bs}(\bx)g(\bx)] =\eps =\Omega(1/\|\bs\|_2)$. 
This implies that any algorithm that agnostically learns LTFs 
to error $\opt+ \eps/3$, where $\opt = R_{0-1}(\mathrm{LTF}; D)$, 
can be used to distinguish between 
the case that $y$ is independent of $\bx$ (in which case $\opt = 1/2$) 
and the case described above (i.e., $y=f_{\bs}(\bx) = (-1)^{\lfloor\langle  \bx,\bs\rangle  \rfloor}$), 
where $\opt = 1/2-\eps$. This implies that the agnostic learning of Gaussian LTFs is LWE-hard.

For ReLU regression, we show
that there exists a ReLU $g$ such that 
$\E_{\bx \sim \mathcal{N}(\mathbf{0}, \mathbf{I})}[f_{\bs}(\bx)g(\bx)] =\eps=\Omega(1/\|\bs\|_2^2)$. 
In particular, this correlation means that the $L^2_2$-distance 
between $f$ and an appropriately scaled version of $g$ 
is bounded away from $1$ in the negative direction. 
Thus, it is LWE-hard to distinguish between the case where $y=f_{\bs}(\bx)$ 
(and thus the minimal $L^2_2$-error for ReLUs is at most $1-\eps^2$) 
and the case where $y$ is independent of $\bx$ 
(in which case the minimum $L^2_2$-error of any ReLU is at least $1$).

The above sketch glossed over the following important technical point.
By applying the aforementioned reduction directly to the standard version 
of the (continuous) LWE problem~\cite{BRST21}
which has secret vector $\bs$ with $\|\bs\|_2=\sqrt{n}$,
we can obtain a time lower bound for our agnostic learning problems 
{\em only if the additive error $\eps$ is tiny}, namely $\eps= \tilde{O}(1/\sqrt{n})$.
In order to prove lower bounds for a wider
range of $\eps$, 
we will need to instead start from a {\em small norm version} 
of the continuous LWE problem, where the secret vector $\bs$ 
roughly satisfies $\|\bs\|_2\approx 1/\eps$.
We accomplish this via a non-trivial modification 
of a reduction in \cite{vinod2022}, which we view
as an additional technical contribution of this work. Specifically, 
\cite{vinod2022} gave a reduction of the standard discrete LWE problem 
to a discrete LWE problem with a sparse secret 
(namely, secret vector $\bs\in  \{0, \pm 1\}^n$ with $\|\bs\|_1=k$). 
(This itself leverages an idea in \cite{mic18}.)
After that, \cite{vinod2022} further reduces the sparse secret discrete LWE problem 
to a continuous LWE problem whose secret vector has small $\ell_2$-norm.
The limitation here is that their $\ell_2$-norm bound has a factor of $\sqrt{\log m}$, 
where $m$ is the number of samples. 
Unfortunately, this quantitative dependence prevents 
us from obtaining the near optimal lower bound for our learning LTFs tasks. 
To address this issue, we present a (slightly) improved reduction 
(see Lemma \ref{lem:lwe-gaussian-sample}), removing the $\sqrt{\log m}$ factor on the secret vector norm. 
This allows us to apply our reduction technique to the small norm continuous LWE problem,
giving nearly tight lower bounds for our learning problems.

\section{Preliminaries} \label{sec:prelims}

\paragraph{Notation}
We use $\langle\bx,\by \rangle$ for the inner product 
between vectors $\bx, \by \in \R^n$.
For $p \geq 1$ and $\bx\in \R^n$, 
we use $\|\bx\|_p\eqdef\left (\sum_{i=1}^n |\bx_i|^p\right )^{1/p}$ 
to denote the $\ell_p$-norm of $\bx$.
We use $\mathbb{S}^{n-1}$ to denote the unit sphere in $\R^n$, i.e., the 
set $\mathbb{S}^{n-1} \eqdef \{\bx \in\R^n : \|\bx\|_2=1 \}$.
For $q \in \N$, we denote $\Z_q \eqdef \{0,1,\cdots,q-1\}$ 
and $\R_q \eqdef [0,q)$. 
We use $\mod_{q}: \R^n \to \R_q^n$ 
to denote the function that applies the $\mod_q$ operation 
on each coordinate of the vector $\bx$. 
For a set $S\subset \R^n$, we use $U(S)$ to denote the uniform distribution over $S$.
We use $\bX\sim D$ to denote a random variable $\bX$ with distribution $D$. 
For a random variable $\bx$ (resp. a distribution $D$), we use $P_\bx$
(resp. $P_D$) to denote the probability density function or probability 
mass function of the random variable $\bx$ 
(resp. distribution $D$).
We will require the following notion of partially supported Gaussians.

\begin{definition} [Partially Supported Gaussian Distribution] \label{def:dpart}
For $\sigma\in \R_{+}$ and $\bx\in \R^n$, let
$\rho_{\sigma}(\bx)\eqdef \sigma^{-n}\exp\left (-\pi(\|\bx\|_2/\sigma)^2\right )$.
For any countable set\footnote{We will take the sets $S$ to be shifts of lattices, guaranteeing  
that $\rho_{\sigma}(S)$ is finite and the distribution is well-defined.} $S \subseteq \R^n$, 
we let $\rho_{\sigma}(S)\eqdef \sum_{\bx\in S} \rho_{\sigma}(\bx)$, 
and let $\Dgaus_{S,{\sigma}}$ be the distribution supported on $S$ with pmf
$P_{\Dgaus_{S,\sigma}}(\bx) = \rho_{\sigma}(\bx)/\rho_{\sigma}(S)$.
\end{definition}
For consistency, 
we will use $D^\gaus_{\R^n,\sqrt{2\pi}\sigma}$ to denote the $n$-dimensional
Gaussian distribution $\gaus(\mathbf{0}, \sigma^2\mathbf{I})$.

\paragraph{Learning with Errors}
The Learning with Errors (LWE) problem was introduced in \cite{reg05}. 
Here we use a slightly more generic definition
for the convenience of later reductions between different variants of LWE problems.

\begin{definition} [Generic LWE]\label{def:general-lwe-one-dim}
Let $m,n\in \N$, $q\in \R_+$, and let 
$D_\smp, D_\scr, D_\ns$ 
be distributions on $\R^n, \R^n, \R$ respectively. 
In the $\lwe(m,D_\smp,D_\scr,D_\ns,\mod_q)$ problem, 
we are given $m$ independent samples $(\bx,y)$ 
and want to distinguish between the following two cases:
\begin{enumerate} [leftmargin=*]
	\item [(i)] {\bf Alternative hypothesis}: A vector $\bs$ is drawn from $D_\scr$ ($\bs$ is called ``the secret vector''). 
	Then each sample $(\bx,y)$ is generated by taking $\bx\sim D_\smp, \nsv\sim D_\ns$, 
	and letting $y= \mod_q(\left<\bx, \bs\right>+\nsv)$.  	
	\item [(ii)] {\bf Null hypothesis}: The random variables $\bx$ and $y$ 
	are independent.
	Moreover, $\bx$ has the same marginal distribution as in the alternative hypothesis,
	and $y$ has the marginal distribution as $U(S)$ 
	where $S$ is	the support of the marginal distribution of $y$ in the alternative hypothesis.
\end{enumerate}
An algorithm $A$ solves the LWE problem with advantage $\alpha>0$, if 
$p_{\rm alternative}-p_{\rm null}\geq \alpha$
where $p_{\rm alternative}$ (resp. $p_{\rm null}$) is the probability 
that $A$ outputs ``alternative hypothesis'' 
if the input distribution is from the alternative hypothesis (resp. null hypothesis).
When a distribution in LWE 
is uniform over some set $S$, 
we may abbreviate $U(S)$ as $S$.
\end{definition}

Our hardness assumption is the following: 

\begin{assumption} [Sub-exponential LWE Assumption] \label{asm:LWE-hardness}
Let $c > 0$ be a sufficiently large constant and $q\in \N$. 
For any constants $\beta\in (0,1)$, $\kappa \in \N$, the problem
$\lwe(2^{O(n^\beta)},\Z_q^n,\Z_q^n ,\Dgaus_{\Z, \sigma},\mod_q)$ 
with $q \leq n^\kappa$ and $\sigma = c\sqrt{n}$ 
cannot be solved in $2^{O(n^\beta)}$ time with $2^{-O(n^\beta)}$ advantage.
\end{assumption}

This is a widely-believed conjecture, supported by our current understanding of the field.
\cite{reg05,Peikert09} 
gave a polynomial-time \emph{quantum} reduction
from approximating (the decision version of) 
the Shortest Vector Problem (GapSVP) to LWE (with similar $n, q, \sigma$ parameters). 
We note that
the fastest known algorithm for GapSVP takes $2^{O(n)}$ time~\cite{AggarwalLNS20}.
Thus, refuting the conjecture would be a major breakthrough. 
A similar assumption was also used in~\cite{vinod2022}  and~\cite{DKMR22}
to establish computational hardness of learning Gaussian mixtures
and distribution-independent learning of Massart halfspaces. 

In addition to the standard LWE problem above, we will also consider 
a continuous variant of the LWE problem (introduced in \cite{BRST21}) 
where supports of the distributions are continuous.
In particular, the first part of our proof is the following proposition which 
slightly modifies the proof in \cite{vinod2022} and gives
the reduction from the standard LWE to the continuous LWE. 
The proof is deferred to Appendix \ref{app:clwe}. 

\begin{proposition} [Hardness of continuous LWE (cLWE) with Small-Norm Secret] \label{pro:lwe-short-norm}
Under Assumption \ref{asm:LWE-hardness}, 
for any $n\in \N$, any constants $\beta\in (0,1)$, $\kappa\in \N$, $\gamma\in \R_+$ and
any $\log^\gamma n\leq k\leq cn$ where $c>0$ is a sufficiently small universal constant, 
the problem $\lwe(n^{O(k^\beta)},D_{\R^n,1}^\gaus,\mathbb{S}^{n-1},D_{\R,\sigma}^\gaus,\mod_T)$ with 
$\sigma\geq k^{-\kappa}$ and $T=1/(c'\sqrt{k\log n})$ where $c'>0$ is a sufficiently large universal constant 
cannot be solved in $n^{O(k^\beta)}$ time with $n^{-O(k^\beta)}$ advantage.
\end{proposition}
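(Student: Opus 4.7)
The plan is to follow the two-step strategy of \cite{vinod2022}: first reduce the standard discrete LWE of Assumption~\ref{asm:LWE-hardness} to a variant whose secret is a $k$-sparse $\{0,\pm 1\}$-vector, and then transform that sparse-secret discrete LWE into the continuous LWE on $\mathbb{S}^{n-1}$ demanded by the proposition. Both steps follow the framework of \cite{vinod2022}, but the second step must be strengthened to remove the $\sqrt{\log m}$ slack in the effective secret norm that would otherwise rule out the sharp parameters $\sigma \geq k^{-\kappa}$ and $T = 1/(c'\sqrt{k \log n})$.

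The first step adapts Micciancio's sparsification technique~\cite{mic18} as used in \cite{vinod2022} to reduce $\lwe(2^{O(n^\beta)}, \Z_q^n, \Z_q^n, \Dgaus_{\Z, c\sqrt{n}}, \mod_q)$ to a version whose secret is drawn uniformly from $\{0,\pm 1\}$-vectors of Hamming weight $k$, while effectively reducing the hard dimension from $n$ to $k$. Under Assumption~\ref{asm:LWE-hardness}, this sparse-secret problem requires $n^{\Omega(k^\beta)}$ time with $n^{-O(k^\beta)}$ advantage for every $\log^\gamma n \leq k \leq cn$, and the noise remains a discrete Gaussian of width $\Omega(\sqrt{n})$. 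This is essentially off-the-shelf.

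In the second step I would convert a discrete sample $(\bx_{\rm dis}, y_{\rm dis}) \in \Z_q^n \times \Z_q$ into a continuous pair $(\bx, y)$ as follows: draw a fresh $\be \sim \Dgaus_{\R^n, \tau}$ and output $\bx = (\bx_{\rm dis} + \be)/M$ for a normalization $M$ and a width $\tau$ chosen so that $\bx$ is negligibly close in total variation to $\Dgaus_{\R^n, 1}$ (using the smoothing-parameter bound for the lattice $q\Z^n$). The label is $y = \mod_T(y_{\rm dis}/M - \langle \be, \bs_0 \rangle/M + \eta)$, where $\bs_0$ is the sparse secret and $\eta$ is a small Gaussian absorbing the rounding error. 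After rescaling, the continuous secret is $\bs = \bs_0/\|\bs_0\|_2$, which lies exactly on $\mathbb{S}^{n-1}$ because $\|\bs_0\|_2 = \sqrt{k}$ is fixed by construction.

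The main technical obstacle, and the whole point of Lemma~\ref{lem:lwe-gaussian-sample}, is to carry out this second step without the $\sqrt{\log m}$ loss in \cite{vinod2022}. There, the loss arises from requiring $\max_{i \leq m} |\langle \be_i, \bs_0\rangle|$ to stay below the noise threshold \emph{simultaneously} across all $m$ samples, which inflates the allowed secret norm by $\sqrt{\log m}$. My plan is to replace this worst-case bound by a per-sample distributional argument: each $\langle \be_i, \bs_0 \rangle / M$ is an independent mean-zero Gaussian of variance $\tau^2 \|\bs_0\|_2^2 / M^2 = \tau^2 k / M^2$, and is therefore absorbed directly into the target noise $\nsv \sim \Dgaus_{\R, \sigma}$ on a sample-by-sample basis, provided $\tau \sqrt{k}/M$ is a small enough fraction of $\sigma$. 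Because the absorption is at the level of distributions (not a tail bound over $m$ samples), no union bound is needed and the secret ends up precisely on $\mathbb{S}^{n-1}$. Tracking the parameters $T$, $\sigma$, and the sample budget $n^{O(k^\beta)}$ through the two reductions then yields the proposition.
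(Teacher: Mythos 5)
Your first step (sparsification via~\cite{mic18} as used in \cite{vinod2022}) matches the paper's Lemma~\ref{lem:lwe-sparse-secret}, but your second step misdiagnoses where the $\sqrt{\log m}$ slack comes from, and as a result your proposed fix does not remove it. You attribute the loss to a tail bound controlling $\max_{i\le m}|\langle \be_i,\bs_0\rangle|$, and propose to replace it with a per-sample distributional absorption of $\langle\be_i,\bs_0\rangle/M$ into the noise. But in the actual reduction this inner product is part of the \emph{signal} $\langle \bx,\bs\rangle$ once $\bx$ is defined to include $\be$, so there is nothing to absorb and no union bound over it anywhere. The $\sqrt{\log m}$ factor in \cite{vinod2022} arises instead because the constructed sample $\bx$ is only \emph{approximately} Gaussian: for $m$ samples to be indistinguishable from a product of true Gaussians, the per-sample total-variation error must be driven below $1/m$, which forces the smoothing width (your $\tau$, the paper's $\tilde\sigma$) to scale like $\sqrt{\log m}$, shrinking the allowed secret norm by the same factor. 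Your plan explicitly relies on choosing $\tau$ ``so that $\bx$ is negligibly close in total variation to $\Dgaus_{\R^n,1}$,'' so you re-introduce exactly the loss you set out to remove. The paper's actual innovation in Lemma~\ref{lem:lwe-gaussian-sample} is different: Fact~\ref{fct:bound-on-collaped-distribution} shows that already at $\tilde\sigma=\Theta(\sqrt{\log n})$ the density of the expanded Gaussian $\Dexp_{\R_1^n,\tilde\sigma}/\tilde\sigma$ is \emph{pointwise} within a constant factor of the true Gaussian density, and a rejection-sampling step then makes the marginal of $\bx'$ \emph{exactly} $\Dgaus_{\R^n,1}$, at the cost of only a constant fraction of samples. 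No $\log m$ ever enters. Without this rejection step you have, at best, re-derived the original Lemma~18 bound of $\alpha = c/(r\sqrt{\log n + \log m + \omega(\log\lambda)})$, which is insufficient for the stated proposition.

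There is also a more mechanical issue with your transformation: you set $y=\mod_T\bigl(y_{\rm dis}/M - \langle\be,\bs_0\rangle/M + \eta\bigr)$, which cannot be computed by the reduction because $\bs_0$ is the unknown secret. If instead you output $y_{\rm dis}/M$ directly, the residual term $\langle\be,\bs_0\rangle/M$ becomes noise that is correlated with $\bx$ (since $\bx$ is built from $\be$), which destroys the LWE structure. The paper sidesteps this by drawing $\tbx$ from the discrete Gaussian on the shifted lattice $\Z^n+\bx/q$ (rescaled by $\tilde\sigma$), so that $q\tilde\sigma\tbx - \bx \in q\Z^n$ and hence $\langle\bs, q\tilde\sigma\tbx-\bx\rangle\in q\Z$ vanishes under $\mod_q$ for the integer secret $\bs$, requiring no knowledge of $\bs$. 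Sampling \emph{continuous} Gaussian noise $\be$ breaks this cancellation, so the lattice structure of the perturbation is essential, not incidental.
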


\section{Hardness of Agnostically Learning Gaussian LTFs} \label{sec:ltfs}

In this section, we continue from Proposition \ref{pro:lwe-short-norm} 
(the proof of which is deferred to Appendix \ref{app:clwe}) 
which is the first step of our reduction,
and give the second and main part of the reduction.
We thereby establishing 
the desired cryptographic hardness of agnostically learning LTFs under the Gaussian distribution. 

The high-level idea is the following.
Given samples $(\bx,y)$ from a distribution $D$ on $\R^n\times \R_T$, 
which is an instance of 
the cLWE problem $\lwe(n^{O(k^\beta)},D_{\R^n,1}^\gaus,\mathbb{S}^{n-1},D_{\R,\sigma}^\gaus,\mod_T)$
(note that $T$ is the ``period'' of the periodic signal on the hidden direction) from
Proposition \ref{pro:lwe-short-norm},
we efficiently generate samples $(\bx,y')$ (we leave $\bx$ unchanged) 
from a distribution $D'$ on $\R^n\times \{\pm 1\}$ such that:
\begin{enumerate}  
	\item [(i)] If $D$ is from the alternative hypothesis case, then there exists an LTF 
	$h:\R^n\to \{\pm 1\}$ such that $R_{0-1}(h; D')\leq 1/2-\Omega(T)$.
	
	\item [(ii)] If $D$ is from the null hypothesis case, then for $(\bx,y')\sim D'$, we have that 
	$y'=+1$ with probability $1/2$ and $y'=-1$ with probability $1/2$ independent of $\bx$; 
	thus, no hypothesis can achieve error non-trivially better than $1/2$.
\end{enumerate}  
Given the above properties, if an algorithm can agnostically learn 
LTFs with Gaussian marginals 
to error $R_{0-1}(\LTF; D')+o(T)$, then it can distinguish 
the two cases above and solve the LWE problem.

In the body of this section, we describe our reduction and formalize the above.
The main theorem of this section, stated and proved below, 
establishes hardness for a natural decision version 
of agnostically learning LTFs.

\begin{theorem}[Cryptographic Hardness of Agnostically Learning Gaussian LTFs] \label{thm:main-thm-ltf}
Under Assumption \ref{asm:LWE-hardness}, 
for any $n\in \N$,
for any constants $\beta\in (0,1)$, $\gamma\in \R_+$ and any $\log^{\gamma} n\leq k\leq cn$
where $c$ is a sufficiently small constant,
there is no algorithm that runs in time $n^{O(k^\beta)}$ and distinguishes between 
the following two cases of a joint distribution $D$ of $(\bx,y)$ supported on $\R^n\times \{\pm 1\}$ 
with marginal $D_\bx=D^\gaus_{\R^n,1}$, with $n^{-O(k^\beta)}$ advantage:
\begin{enumerate}[leftmargin=*]
\item [(i)] {\bf Alternative Hypothesis:} There exists an LTF with 0-1 error non-trivially smaller than $1/2$, namely
$R_{0-1}(\LTF; D)\leq 1/2-\Omega\left (1/\sqrt{k\log n}\right )$.
\item [(ii)] {\bf Null Hypothesis:} A sample $(\bx, y) \sim D$ satisfies the following:
$y=+1$ with probability $1/2$ and $y=-1$ with probability $1/2$ independent of $\bx$. 
\end{enumerate}
\end{theorem}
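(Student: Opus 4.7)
The plan is to reduce the small-norm continuous LWE problem of Proposition~\ref{pro:lwe-short-norm} to the decision problem stated in Theorem~\ref{thm:main-thm-ltf}. Given a cLWE sample $(\bx,y)\in \R^n \times \R_T$ with $T = 1/(c'\sqrt{k\log n})$, I transform it to an LTF sample $(\bx, y')\in \R^n \times \{\pm 1\}$ by the simple rule $y' = +1$ if $y \in [0, T/2)$ and $y' = -1$ otherwise. The $\bx$-marginal is preserved and equals $\Dgaus_{\R^n, 1}$ as required.

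Under the cLWE null hypothesis, $y$ is uniform on $[0,T)$ independently of $\bx$, so $y'$ is a uniform $\pm 1$ label independent of $\bx$, matching case (ii) on the nose. Under the cLWE alternative $y = \langle \bx, \bs\rangle + z \bmod T$ for some $\bs \in \mathbb{S}^{n-1}$ and Gaussian noise $z$, so $y' = \mathrm{sq}\!\bigl(2(\langle \bx,\bs\rangle + z)/T\bigr)$, where $\mathrm{sq}(u)=(-1)^{\lfloor u \rfloor}$ is the period-$2$ square wave. It remains to exhibit an LTF whose $0$-$1$ error is at most $1/2 - \Omega(T)$.

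My candidate is $h(\bx) = \sgn(\langle \bx,\bs\rangle)$; although this depends on the hidden secret, only its existence is required. To compute $\E[y'h(\bx)]$, I would expand $\mathrm{sq}(u) = (4/\pi) \sum_{k \text{ odd}} \sin(k\pi u)/k$ and then, after the change of variable $u = 2(\langle \bx,\bs\rangle + z)/T$, recognize the terms as sines in $v = \langle \bx,\bs\rangle$ with frequencies $2\pi k/T$. Convolution with the $z$-noise attenuates mode $k$ by $\exp(-2\pi^2 k^2 \sigma^2/T^2)$, which is $1-o(1)$ for low $k$ so long as $\sigma \ll T$; the proposition allows $\sigma \geq k^{-\kappa}$ for any constant $\kappa$, so by taking $\kappa$ large this is comfortably achievable. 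The remaining one-dimensional integral $\E_{v \sim \Dgaus_{\R,1}}[\sin(2\pi k v/T)\sgn(v)]$ evaluates to $\Theta(T/k)$ to leading order via one integration by parts, and summing over odd $k$ yields
\[
\E[y'h(\bx)] \;=\; \Theta(T) \;=\; \Omega\!\bigl(1/\sqrt{k\log n}\bigr).
\]
Converting to $0$-$1$ error, $R_{0-1}(h; D) = (1-\E[y'h(\bx)])/2 \leq 1/2 - \Omega(1/\sqrt{k\log n})$, establishing case (i).

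Combining the two directions, any algorithm distinguishing the two cases of Theorem~\ref{thm:main-thm-ltf} in time $n^{O(k^\beta)}$ with advantage $n^{-O(k^\beta)}$ would, by this transformation, distinguish the two cases of the cLWE instance in the same time and advantage, contradicting Proposition~\ref{pro:lwe-short-norm}. The main technical obstacle is the Fourier computation: I must verify that the leading $k=1$ term really contributes $\Theta(T)$, that higher Fourier modes of $\mathrm{sq}$ do not destructively cancel it, and that the noise parameter $\sigma$ in Proposition~\ref{pro:lwe-short-norm} can be tuned (while keeping the proposition applicable) so that the attenuation factor $\exp(-2\pi^2 k^2 \sigma^2/T^2)$ is negligibly different from $1$ for all modes that meaningfully contribute.
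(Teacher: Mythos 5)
Your reduction map is the same as the paper's (send a cLWE sample $(\bx,y')$ to $(\bx, \pm 1)$ according to which half-period of $[0,T)$ the label $y'$ lands in), and your handling of the null hypothesis is correct. Where you diverge from the paper is in how you certify the alternative hypothesis. You exhibit the single origin-centered LTF $h(\bx) = \sgn(\langle \bs, \bx\rangle)$ and estimate $\E[y'h(\bx)] = \Theta(T)$ via the Fourier expansion of the square wave. The paper instead exhibits \emph{two} offset LTFs, $h_1(\bx) = \sign(\langle\bs,\bx\rangle - T/6)$ and $h_2(\bx) = \sign(-\langle\bs,\bx\rangle + T/3)$, which disagree everywhere except on the narrow strip $B = \{\langle\bs,\bx\rangle \in [T/6, T/3]\}$, where both output $+1$. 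Since the strip lies in the interior of the first half-period and the noise scale $\sigma$ is taken to be $o(T)$, the true label on $B$ is $+1$ with probability $1-o(1)$, and a one-line conditional-probability computation gives $R_{0-1}(h_1;D) + R_{0-1}(h_2;D) \le 1 - \Omega(T)$, so one of the two LTFs wins. This avoids any Fourier analysis.

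Your flagged obstacle --- that the $k=1$ mode contributes $\Theta(T)$ and higher modes do not destructively cancel --- does in fact go through: one integration by parts gives $\E_{v}[\sin(\omega v)\sgn(v)] = 2\phi(0)/\omega + O(1/\omega^2)$ for $v$ Gaussian, so with $\omega = 2\pi k/T$ each mode contributes a \emph{positive} $\Theta(T/k^2)$ up to lower-order corrections, the series $\sum_{k\text{ odd}} (4/\pi k)\cdot\Theta(T/k)$ converges to $\Theta(T)$, the correction terms total $O(T^2)$, and the noise attenuation factors $e^{-\Theta(k^2\sigma^2/T^2)}$ only shrink (never flip) the already-positive terms, so for $\sigma = o(T)$ the $k=1$ mode alone already forces $\E[y'h(\bx)] = \Omega(T)$. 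So your proposal is a genuine alternative proof. The trade-off is that the paper's two-LTF argument is entirely elementary and requires no estimates on an infinite series or on interchanging sum and integral, whereas your single-LTF computation is perhaps more direct conceptually (the agreement on $(-T/2,T/2)$ is visible in the $k=1$ mode) but requires the Fourier bookkeeping you correctly identified as the remaining work.
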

\begin{proof}
We give an efficient method taking as input 
samples from a distribution $D'$ --- that is either from the alternative hypothesis
or the null hypothesis of 
$\lwe(n^{O(k^\beta)},D_{\R^n,1}^\gaus,\mathbb{S}^{n-1},D_{\R,\sigma}^\gaus,\mod_T)$ 
from Proposition \ref{pro:lwe-short-norm} --- 
and generates samples from another distribution $D$ with the following properties:
If $D'$ is from the alternative (resp.~null) hypothesis of the LWE problem, 
then the resulting distribution $D$ will satisfy the alternative (resp.~null) 
hypothesis requirement of the theorem for the agnostic LTF learning decision problem.

The reduction process is the following:
For a sample $(\bx,y')$ from a distribution $D'$, 
which is an instance of the problem
$\lwe(n^{O(k^\beta)},D_{\R^n,1}^\gaus,\mathbb{S}^{n-1},D_{\R,\sigma}^\gaus,\mod_T)$ 
from Proposition \ref{pro:lwe-short-norm},
we simply output $(\bx,y) \sim D$, where $y=+1$ if $y'\leq T/2$ and $y=-1$ otherwise.
We argue that $D$ satisfies the desired requirements stated above.
We first note that the marginal $D_{\bx}$ of $D$ satisfies $D_{\bx}=D^\gaus_{\R^n,1}$, 
therefore it suffices to verify that
$R_{0-1}(\LTF; D)=1/2-\Omega\left (1/\sqrt{k\log n}\right )$ and 
$y=+1$ with probability $1/2$
independent of $\bx$
for each case respectively.

For the alternative hypothesis case,
let $D'$ be from the alternative hypothesis case of the LWE.
Let $\bs$ be the secret vector in the LWE problem.
We consider the following two LTFs: 
$h_1(\bx)=\sign(\langle \bs,\bx\rangle -T/6)$
and $h_2(\bx)=\sign(-\langle \bs,\bx\rangle +T/3)$.
If we can show that $R_{0-1}(h_1; D)+R_{0-1}(h_2; D)\leq 1-\Omega(T)$,
then either $h=h_1$ or $h=h_2$ satisfies $R_{0-1}(h; D)\leq 1/2-\Omega(T)$,
which implies that $R_{0-1}(\LTF; D)\leq R_{0-1}(h; D)\leq 1/2-\Omega(1/\sqrt{k\log n})$
by the definition of $T$.  

To show that $R_{0-1}(h_1; D)+R_{0-1}(h_2; D)\leq 1-\Omega(T)$, 
we examine the subset of the domain where $h_1$ and $h_2$ agree,
namely the region 
\begin{align*}
B\eqdef &\{\bt\in \R^n\mid h_1(\bt)=h_2(\bt)\}\\
=&\{\bt\in \R^n\mid \langle \bs,\bt \rangle \in [T/6,T/3]\} \;.
\end{align*}
Since for any $\bt\in B$, it is always the case that $h_1(\bt)=h_2(\bt)=+1$,
we can write
\begin{align*}
	&R_{0-1}(h_1; D)+R_{0-1}(h_2; D)  \\
	=&\pr_{(\bx,y)\sim D}[y\neq h_1(\bx)]+\pr_{(\bx,y)\sim D}[y\neq h_2(\bx)]\\
	=&\pr_{(\bx,y)\sim D}[\bx\not\in B\land y\neq h_1(\bx)]\\
	&+\pr_{(\bx,y)\sim D}[\bx\not\in B\land y\neq h_2(\bx)]\\
	&+2\pr_{(\bx,y)\sim D}[\bx\in B\land y=-1] \;.
\end{align*}
Since for any $\bx\not \in B$ we have that $h_1(\bx)\neq h_2(\bx)$, 
the first two terms sum to $\pr_{(\bx,y)\sim D}[\bx\not\in B]$.
Therefore, we have that
\begin{align*}
	&R_{0-1}(h_1; D))+R_{0-1}(h_2; D) \\
	=& \pr_{(\bx,y)\sim D}[\bx\not\in B]+2\pr_{(\bx,y)\sim D}[\bx\in B\land y=-1]\\
	=& 1+\pr_{(\bx,y)\sim D}[\bx\in B\land y=-1]\\
	&-\pr_{(\bx,y)\sim D}[\bx\in B\land y=+1]\\
	=& 1-\pr[\bx\in B](1-2\pr_{(\bx,y)\sim D}[y=-1 \mid \bx\in B]) \;.
\end{align*}
From the definition of $B$ and $\bx\sim D^\gaus_{\R^n,1}$, 
we have $\pr[\bx\in B]=\Omega(T)$. Thus, we obtain
\begin{equation} \label{eqn:sum-err}
R_{0-1}(h_1; D)+R_{0-1}(h_2; D)
= 1-\Omega(T) \left( 1-2\pr_{(\bx,y)\sim D}[y=-1 \mid \bx\in B] \right) \;.
\end{equation}
If we can show that $\pr[y=-1 \mid \bx\in B]\leq 1/3$, then we are done
since this implies that $R_{0-1}(h_1; D)+R_{0-1}(h_2; D)\leq 1-\Omega(T)$.

We note that from the definition of the Alternative case distribution 
of the LWE problem, we have
$$y'=\mod_{T}(\langle \bs,\bx\rangle +z) \;,$$
and that $y=-1$ only if $y'>T/2$,
which in turn happens only if 
$$\langle \bs,\bx\rangle +z> T/2 \textrm{ or } \langle \bs,\bx\rangle +z< 0 \;.$$
For $\bx\in B$,  we have that $\langle \bs,\bx\rangle\in [T/6,T/3]$,
therefore $y=-1$ only if $|z|\geq T/6$.
Notice that $z\sim D_{\R,\sigma}^\gaus$
and Proposition \ref{pro:lwe-short-norm} states that the LWE problem 
is hard for any fixed constant $\kappa\in \N$ and $\sigma\geq k^{-\kappa}$.
Given the constant $\gamma\in \R^+$ in this theorem, 
we will take $\kappa=\lceil 1/(2\gamma)+1/2+1\rceil$ which is a fixed constant.
Then, by Proposition \ref{pro:lwe-short-norm},
the LWE problem is hard for $\sigma=k^{-\kappa}=1/(k^{3/2}\sqrt{\log n})=o(T)$.
Therefore, we have that
\begin{align*}
\pr_{(\bx,y)\sim D}[y=-1 \mid \bx\in B]
\leq \pr_{z\sim D^\gaus_{\R,\sigma}}[|z|\geq T/6]
=o(1)\; .
\end{align*}
Thus, plugging the above back to \eqref{eqn:sum-err}, we can conclude that
\begin{align*}
R_{0-1}(h_1; D)+R_{0-1}(h_2; D)
=1-\Omega(T) \left(1-2\pr_{(\bx,y)\sim D}[y=-1 \mid \bx\in B] \right)
\leq 1-\Omega(T) \;.
\end{align*}
Then, as argued above, 
if both 
$h=h_1$ and $h=h_2$ do not satisfy $R_{0-1}(h; D)\leq 1/2-\Omega\left (T\right )$,
then $R_{0-1}(h_1; D)+R_{0-1}(h_2; D)>1-\Omega\left (T\right )$, a contradiction.
Thus, either 
$h=h_1$ or $h=h_2$ satisfies 
$R_{0-1}(h; D)\leq 1/2-\Omega(T)\leq 1/2-\Omega\left (1/\sqrt{k\log n}\right )$.
This completes the proof for the alternative hypothesis case.

For the null hypothesis case, 
it is immediate that $y=+1$ with probability $1/2$ independent of $\bx$,
since $y'\sim U([0,T))$ independent of $\bx$ 
in the null hypothesis case of the LWE problem.
This completes the proof of correctness.

It remains to verify the time lower bound and the distinguishing 
advantage for agnostically learning LTFs.
From Proposition \ref{pro:lwe-short-norm}, 
we know that under Assumption \ref{asm:LWE-hardness},
for the problem
$\lwe(n^{O(k^\beta)},D_{\R^n,1}^\gaus,\mathbb{S}^{n-1},D_{\R,\sigma}^\gaus,\mod_T)$
with 
any $\sigma\geq k^{-\kappa}$ (where $\kappa\in \N$ is a constant) 
and $T=1/(c'\sqrt{k\log n})$, where $c'>0$ is a sufficiently large universal constant, 
the problem cannot be solved in $n^{O(k^\beta)}$ time with $n^{-O(k^\beta)}$ advantage.
Therefore, under the same assumption, there is no algorithm that solves the decision version of
the agnostic learning LTFs problem
(defined in the theorem statement) 
in time $n^{O(k^\beta)}$ with $n^{-O(k^\beta)}$ advantage.
\end{proof}

The following corollary immediately 
follows from Theorem~\ref{thm:main-thm-ltf}.

\begin{corollary}\label{cor:ltf}
Under Assumption \ref{asm:LWE-hardness}, 
for any constants $\alpha\in (0,2)$, $\gamma>1/2$
and any $c/(\sqrt{n\log n})\leq \eps\leq 1/\log^{\gamma} n$
where $c$ is a sufficiently large constant, 
there is no algorithm that agnostically learns LTFs on $\R^n$ 
with Gaussian marginals to additive error $\eps$ 
and runs in time $n^{O(1/(\eps\sqrt{\log n})^\alpha)}$.
\end{corollary}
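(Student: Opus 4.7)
The plan is to derive Corollary~\ref{cor:ltf} from Theorem~\ref{thm:main-thm-ltf} by tuning $k$ to match the target error $\eps$, and using any candidate $\eps$-accurate agnostic learner as the core of a distinguisher between the two hypotheses of the theorem. Writing the guarantee of Theorem~\ref{thm:main-thm-ltf} as $R_{0-1}(\LTF;D)\le 1/2-c_0/\sqrt{k\log n}$ in the alternative case (versus error exactly $1/2$ in the null), I would choose $k\eqdef \lfloor c_0^2/(16\eps^2\log n)\rfloor$, so that $c_0/\sqrt{k\log n}\geq 4\eps$ and the alternative-case LTF error is at most $1/2-4\eps$. The distinguisher then runs the supposed $\eps$-accurate learner $\mathcal{A}$ to obtain a hypothesis $h$, estimates $R_{0-1}(h;D)$ on $\Theta(\log(n)/\eps^2)$ fresh samples, and thresholds the estimate at $1/2-2\eps$: under the alternative hypothesis $R_{0-1}(h;D)\le 1/2-4\eps+\eps=1/2-3\eps$, while under the null $R_{0-1}(h;D)\geq 1/2$, so Hoeffding separates the two cases with advantage $1-\negl(n)$, which greatly exceeds the $n^{-O(k^\beta)}$ threshold of Theorem~\ref{thm:main-thm-ltf}.

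The second step is to verify that the chosen $k$ lies in the admissible range $[\log^{\gamma'}n,\,c'n]$ of Theorem~\ref{thm:main-thm-ltf}. Setting $\gamma'\eqdef 2\gamma-1$, which is a positive constant because the corollary assumes $\gamma>1/2$, the corollary's upper bound $\eps\le 1/\log^\gamma n$ translates into $k\geq \log^{\gamma'}n$, and its lower bound $\eps\geq c/\sqrt{n\log n}$ (with $c$ sufficiently large) translates into $k\le c'n$ for the requisite small universal constant $c'$ in Theorem~\ref{thm:main-thm-ltf}. Hence Theorem~\ref{thm:main-thm-ltf} applies at this $k$, ruling out any $n^{O(k^\beta)}$-time distinguisher with advantage $n^{-O(k^\beta)}$.

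Finally, the total running time of the distinguisher is $T_{\mathcal{A}} + \poly(n,1/\eps)$, and since $k=\Theta(1/(\eps^2\log n))$ we have $n^{O(k^\beta)} = n^{O(1/(\eps\sqrt{\log n})^{2\beta})}$. Setting $\beta\eqdef \alpha/2\in(0,1)$, which is legal because $\alpha<2$, the forbidden time $n^{O(k^\beta)}$ equals the corollary's target lower bound $n^{O(1/(\eps\sqrt{\log n})^\alpha)}$, so an agnostic learner achieving this runtime would yield a distinguisher violating Theorem~\ref{thm:main-thm-ltf}. The only subtle point will be bookkeeping the universal constants (in particular $c_0$, the constant in the $\Omega$-notation of Theorem~\ref{thm:main-thm-ltf}, and the constants implicit in the range of $\eps$) so that $k$ always falls within the admissible window; but since all constraints are preserved up to constant factors in $\eps$, and the distinguishing gap of $\Theta(\eps)$ is robust to such adjustments, this bookkeeping is routine.
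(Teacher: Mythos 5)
Your proposal is correct and follows essentially the same route as the paper's proof: choose $k$ so that $\eps \asymp 1/\sqrt{k\log n}$, use the hypothetical $\eps$-accurate learner to decide the testing problem of Theorem~\ref{thm:main-thm-ltf}, and set $\beta = \alpha/2$ to translate the $n^{O(k^\beta)}$ time bound into $n^{O(1/(\eps\sqrt{\log n})^\alpha)}$. The paper's writeup is terser (it elides the explicit distinguisher construction and the check that $k$ lies in the admissible window $[\log^{\gamma'} n, c'n]$), but the substance matches yours.
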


\begin{proof}
We chose the parameter $k$ in Theorem \ref{thm:main-thm-ltf} to be the value that 
$\eps=c/\sqrt{k\log n}$, where $c$ is a sufficiently small constant.
Then any algorithm that agnostically learns LTFs to additive error $\eps$ 
can solve the testing problem of Theorem \ref{thm:main-thm-ltf} with probability $2/3$.
Therefore, no such algorithm should run in time $n^{O(k^\beta)}$ for any $\beta\in (0,1)$.
Since $\eps=c/\sqrt{k\log n}$, and if we chose $\beta=\alpha/2$, then
the time lower bound can be rewritten as 
$n^{O(k^\beta)}=n^{O(1/(\eps\sqrt{\log n})^{2\beta})}=n^{O(1/(\eps\sqrt{\log n})^\alpha)}$.
This completes the proof.
\end{proof}

\section{Hardness of ReLU Regression with Gaussian Marginals}
\label{sec:relus}

In this section, we establish near-optimal computational 
hardness for ReLU regression under Gaussian marginals.
It is worth pointing out that this hardness result would also apply
to any $L$-Lipschitz activation function $f:\R\to \R$, for $L = O(1)$, 
such that there exists $t\in \R$ so that $f(x)$ is a constant for any $x\leq t$.
Roughly, our result says that any algorithm that solves this problem
to error $\opt+\eps$ with Gaussian marginals 
requires $n^{\poly(1/(\eps\log^2 n))}$ time. 

The idea is to show that the same hard instance as in Section~\ref{sec:ltfs} 
can be distinguished by a ReLU regression algorithm.
The main theorem of this section, stated and proved below, 
establishes hardness for a natural decision version of agnostically learning ReLU.

\begin{theorem} \label{thm:main-thm-relu}
Under Assumption \ref{asm:LWE-hardness}, 
for any constants $\beta\in (0,1)$, $\gamma\in \R_+$ and
any $\log^\gamma n\leq k\leq cn$,
where $c$ is a sufficiently small constant,
there is no algorithm that runs in time $n^{O(k^\beta)}$ 
and distinguishes between the following two cases of joint distribution
$D$ on $(\bx,y)$ supported on $\R^n\times \{\pm 1\}$ 
with marginal $D_\bx=D^\gaus_{\R^n,1}$, with $n^{-O(k^\beta)}$ advantage:
\begin{enumerate}
\item [(i)] {\bf Alternative Hypothesis:} 
There exists a ReLU with $L_2^2$-error non-trivially smaller than $1$, namely
$R_{2}(\mathrm{ReLU}; D)\leq 1-\Omega\left (1/(k\log n)^2\right )$.
\item [(ii)] {\bf Null Hypothesis:} 
A sample $(\bx, y) \sim D$ satisfies the following:
$y=+1$ with probability $1/2$ and $y=-1$ with probability $1/2$ independent of $\bx$. 
\end{enumerate}
\end{theorem}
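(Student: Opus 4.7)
The plan is to reuse the exact reduction from the proof of Theorem~\ref{thm:main-thm-ltf}: given a sample $(\bx, y')$ from the cLWE instance of Proposition~\ref{pro:lwe-short-norm} (with $\sigma = k^{-\kappa}$ for a sufficiently large constant $\kappa = \kappa(\gamma)$, chosen as in Section~\ref{sec:ltfs}), output $(\bx, y)$ with $y = +1$ if $y' \leq T/2$ and $y = -1$ otherwise. The null case is handled immediately: since $y$ is independent of $\bx$ with $\E[y] = 0$ and $\E[y^2] = 1$, for any real-valued hypothesis $h$ one has $\E[(h(\bx) - y)^2] = \E[h(\bx)^2] + 1 \geq 1$. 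The task thus reduces to exhibiting, in the alternative case, a specific ReLU achieving $L_2^2$-error at most $1 - \Omega(T^4) = 1 - \Omega(1/(k \log n)^2)$.

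For this I will consider the ReLU $g^*(\bx) := \lambda \, \relu(\langle \bs, \bx\rangle + T/4) = \relu(\langle \lambda \bs, \bx\rangle + \lambda T/4)$, which for $\lambda \in (0, 1]$ has weight-vector norm $\lambda \leq 1$ and so is a valid ReLU. Writing $g(\bx) = \relu(\langle \bs, \bx\rangle + T/4)$, the error satisfies $\E[(\lambda g - y)^2] = 1 - 2\lambda \E[g y] + \lambda^2 \E[g^2]$, which is minimized at $\lambda^* = \E[gy]/\E[g^2]$ with value $1 - \E[gy]^2/\E[g^2]$. Since $\E[g^2] = \E_{u \sim \mathcal{N}(0,1)}[\relu(u+T/4)^2]$ is close to $\E[\relu(u)^2] = 1/2$ for small $T$, the entire argument reduces to establishing $|\E[gy]| = \Omega(T^2)$; this then yields $\lambda^* = \Theta(T^2) \in (0,1]$ and the required error bound $1 - \Omega(T^4)$.

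To establish the $\Omega(T^2)$ correlation, I use Fourier analysis. In the alternative case $y = \sign(T/2 - (\tilde{u} \bmod T))$ with $\tilde{u} = \langle \bs, \bx\rangle + z$, which admits the Fourier series $y = (4/\pi)\sum_{k \text{ odd}} \sin(2\pi k \tilde{u}/T)/k$; integrating out $z \sim \mathcal{N}(0, \sigma^2)$ gives $Y(u) := \E_z[y(u+z)] = (4/\pi) \sum_{k \text{ odd}} e^{-2\pi^2 k^2 \sigma^2/T^2} \sin(2\pi k u/T)/k$. The key observation is that $\relu(u+T/4) \phi(u)$ (with $\phi$ the standard Gaussian density) has a derivative jump of size $\phi(T/4)$ at $u = -T/4$, so two integrations by parts yield $\int \relu(u+T/4) \sin(\omega u) \phi(u) du = \phi(T/4) \sin(\omega T/4)/\omega^2 + O(1/\omega^3)$ for large $\omega$. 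Evaluating at $\omega_k = 2\pi k/T$ picks out $\sin(\pi k/2) = (-1)^{(k-1)/2}$ for $k$ odd, and summing over the Fourier series gives $\E[gy] \sim (\phi(T/4) T^2/\pi^3) \sum_{k \text{ odd}} \sin(\pi k/2)/k^3 = \phi(T/4) T^2/32 = \Theta(T^2)$ (using $\sum_{m \geq 0} (-1)^m/(2m+1)^3 = \pi^3/32$).

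The main obstacle is verifying that the cLWE noise does not wash out the leading-order signal: we need the attenuation factor $e^{-2\pi^2 \sigma^2/T^2}$ on the $k=1$ Fourier harmonic to be $1 - o(1)$, which requires $\sigma = o(T)$. Since Proposition~\ref{pro:lwe-short-norm} permits $\sigma \geq k^{-\kappa}$ for any constant $\kappa$, we pick $\kappa = \kappa(\gamma)$ just as in the proof of Theorem~\ref{thm:main-thm-ltf} (large enough that $\sigma = o(1/\sqrt{k \log n})$), which also suppresses the $k \geq 3$ harmonics so that they do not cancel the $k = 1$ contribution. The quantitative time lower bound in the theorem then follows from Proposition~\ref{pro:lwe-short-norm} applied exactly as in the proof of Theorem~\ref{thm:main-thm-ltf}, translating the LWE indistinguishability into an indistinguishability between the two ReLU-regression hypotheses.
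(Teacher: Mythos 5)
Your proposal is correct, and it reaches the same final structure as the paper's proof (reduce from Proposition~\ref{pro:lwe-short-norm} via the same mapping $y'\mapsto y$, handle the null case trivially, establish a correlation bound $\left|\E[g(\bx)y]\right| = \Omega(T^2)$ for some ReLU $g$ depending only on $\langle\bs,\bx\rangle$, and then convert the correlation into an $L_2^2$-error bound by optimizing a scale factor). However, you establish the key correlation lower bound by a genuinely different method. The paper's Lemma~\ref{lem: relu-correlation} considers $r(t) = \E[y\,\relu(\langle\bs,\bx\rangle - t)]$ for $t\in[T/6,T/3]$ and shows, via a direct computation of $r''(t)$ and the observation that $\Pr[y=-1\mid\langle\bs,\bx\rangle=t]$ is $o(1)$ on this interval, that $r'' \leq -\Omega(1)$ there; a strong-concavity argument then forces one of $|r(T/6)|,|r(T/4)|,|r(T/3)|$ to be $\Omega(T^2)$. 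You instead expand the (noise-averaged) square wave $Y(u)$ in its Fourier series, use an integration-by-parts asymptotic for $\int\relu(u+T/4)\phi(u)\sin(\omega u)\,du$, and sum the series to get $\E[gy]\sim\phi(T/4)\,T^2/32$. Both routes give $\Omega(T^2)$. The paper's argument is more elementary, avoids the need to justify term-by-term integration of a Fourier series and the $O(1/\omega^3)$ remainder (which, strictly speaking, requires a third integration by parts or an equivalent estimate you did not spell out), and does not depend on the exact sign or constant of the correlation. Yours, in exchange, pins down which ReLU works and the leading constant, which lets you avoid the ``one of three candidates'' case split. One further small deviation: your ReLU has threshold $-T/4 < 0$, so the paper's Lemma~\ref{lem:correlation-implies-error} (which needs $t\geq 0$ so that $\relu(u-t)^2\leq u^2$) does not apply verbatim; you sidestep this correctly by noting $\E[g^2]=\Theta(1)$ directly and optimizing $\lambda^* = \E[gy]/\E[g^2]$ (which is positive and $\leq 1$ since your computed correlation is positive), but it is worth flagging since a negative correlation would require a different fix than simply taking $\lambda<0$ (as $\lambda\relu(z)\neq\relu(\lambda z)$ for $\lambda<0$).
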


\begin{proof}
We start with the following intermediate lemma.
The lemma roughly says that if there exists a ReLU 
nontrivially correlated with a distribution, 
then there must be another ReLU with nontrivial $L_2^2$-error.  

\begin{lemma} \label{lem:correlation-implies-error}
Let $\eps\in (0,1)$ and $D$ be a joint distribution of 
$(u,y)$ supported on $\R\times \{\pm 1\}$ such that
the marginal $D_{u}=\D_{\R,1}^\gaus$ and $\E_{(u,y)\sim D}[y]=0$.
Suppose there is a ReLU of the form $f(u)=\relu(u-t)$ such that
$t\geq 0$ and $\left |\E_{(u,y)\sim D}[yf(u)]\right |\geq\eps$.
Then there exists $k\in (-1,1)$ such that
the ReLU $g(u)= \relu(ku-kt)$
satisfies
$\E_{(u,y)\sim D}[(y-g(u))^2]\leq1-\eps^2$.
\end{lemma}

\begin{proof}
We first note that $g(u)=kf(u)$, thus
\begin{align*}
&\E_{(u,y)\sim D}[(y-g(u))^2]\\
=&\E_{(u,y)\sim D}[y^2]+\E_{(u,y)\sim D}[g(u)^2]-2\E_{(u,y)\sim D}[yg(u)]\\
=&\E_{(u,y)\sim D}[y^2]+k^2\E_{(u,y)\sim D}[f(u)^2]-2k\E_{(u,y)\sim D}[yf(u)] \;.
\end{align*}
Since $y$ is supported on $\{\pm 1\}$,
we have that the first term satisfies $\E_{(u,y)\sim D}[y^2]=1 \;.$

To bound the second term, we show that $f(u)^2\leq u^2$ for any $u$.
Notice that for $u\geq t$, since $t \geq 0$ by assumption,  
we have that $f(u)^2=(u-t)^2\leq u^2$.
For $u< t$, we have that $f(u)^2=0 \leq u^2$.
Therefore, combining with the fact that $u\sim \D_{\R,1}^\gaus$, 
we can conclude that 
$$\E_{(u,y)\sim D}[g(u)^2] = k^2\E_{(u,y)\sim D}[f(u)^2]\leq k^2\E_{(u,y)\sim D}[u^2]=k^2 \;.$$
In summary, we get that 
$$\E_{(u,y)\sim D}[(y-g(u))^2]\leq 1+k^2-2k\E_{(u,y)\sim D}[yf(u)] \;.$$
We now choose the value of $k$.
If $\E_{(u,y)\sim D}[yf(u)]>0$, then we take $k=\eps$; 
otherwise, we take $k=-\eps$, in which case
we always have $k\in (-1,1)$ (since $\eps\in (0,1)$) and 
\begin{align*}
\E_{(u,y)\sim D}[(y-h(u))^2]\leq & 1+\eps^2-2\eps |\E_{(u,y)\sim D}[yf(u)]|\\
\leq & 1-\eps^2 \; . \qedhere
\end{align*}
\end{proof}

We now give a reduction similar to the proof of Theorem \ref{thm:main-thm-ltf}
using Proposition \ref{pro:lwe-short-norm}.
We know that under Assumption \ref{asm:LWE-hardness}
the following holds:
the problem
$\lwe(n^{O(k^\beta)},D_{\R^n,1}^\gaus,\mathbb{S}^{n-1},D_{\R,\sigma}^\gaus,\mod_T)$ 
with any $\sigma\geq k^{-\kappa}$ ($\kappa\in \N$ is a constant) 
and $T=1/(c'\sqrt{k\log n})$, where $c'>0$ is a sufficiently large universal constant,
cannot be solved in $n^{O(k^\beta)}$ time with $n^{-O(k^\beta)}$ advantage.
We will give an efficient reduction of the LWE problem to the problem here.

For a sample $(\bx,y')$ from a distribution $D'$ which is an instance of the problem\\
$\lwe(n^{O(k^\beta)},D_{\R^n,1}^\gaus,\mathbb{S}^{n-1},D_{\R,\sigma}^\gaus,\mod_T)$,
we will simply output $(\bx,y)$ such that: 
(i) $y=+1$ if $y'\leq T/2$, 
and (ii) $y=-1$ otherwise 
as samples from another distribution $D$.
We argue that $D$ will satisfy the following property: 
if $D'$ is from the alternative (resp. null) hypothesis of the LWE problem, 
then the resulting distribution $D$ will satisfy the alternative (resp.~null) 
hypothesis requirement of 
ReLU regression decision problem of Theorem \ref{thm:main-thm-relu}.

Since the marginal $D_{\bx}$ of $D$ satisfies $D_{\bx}=D^\gaus_{\R^n,1}$, 
it is enough to show that in the alternative hypothesis case, 
we have $R_{2}(\mathrm{ReLU}; D)=1-\Omega(1/(k\log n)^2)$, 
and in the null hypothesis case,
we have $y=+1$ with probability $1/2$ independent of $\bx$.

For the alternative hypothesis case, we first introduce the following lemma.
\begin{lemma} \label{lem: relu-correlation}
For any $\bs\in \mathbb{S}^{n-1}$, $\sigma, T\in \R_+$,  
let $D$ be the joint distribution of $(\bx,y)$ supported on $\R^n\times \{\pm 1\}$ 
such that each sample $(\bx,y)$ is generated in the following way.
We take $\bx\sim D_{\R^n,1}^\gaus, \nsv\sim D_{\R,\sigma}^\gaus$, 
and letting $y=+1$ if $\mod_T(\left<\bx, \bs\right>+\nsv)\leq T/2$ and $y=-1$ otherwise.
Given $\sigma=o(T)$, then there is a ReLU of the form $h(\bx)=\relu(\langle \bs,\bx\rangle-t)$
such that  $t\geq 0$ and 
$$\left |\E_{(\bx,y)\sim D}[yh(\bx)]\right |=\Omega(T^2)\; .$$
\end{lemma}
\begin{proof} 
We let $h_t(\bx)\eqdef\relu(\langle \bs,\bx\rangle-t)$ and $r(t)=\E_{(\bx,y)\sim D}[yh_t(\bx)]$.
Then we just need to show that 
there is a $t>0$ such that $|r(t)|=\Omega(T)$.
We observe that the derivative of $r(t)$ is
\begin{align*}
r'(t)
=&\frac{d\E_{(\bx,y)\sim D}[yh_t(\bx)]}{dt}\\
=&\frac{d\E_{(\bx,y)\sim D}[y(\langle \bs,\bx\rangle -t) \idt(\langle \bs,\bx\rangle>t)]}{dt}\\
=&-\pr_{(\bx,y)\sim D}[y=+1\land \langle \bs,\bx\rangle>t ]+\pr_{(\bx,y)\sim D}[y=-1\land \langle \bs,\bx\rangle>t ]\; ,
\end{align*}
and the second derivative of $r(t)$ is
\begin{align*}
r''(t)
=&\frac{d(-\pr_{(\bx,y)\sim D}[y=+1\land \langle \bs,\bx\rangle>t ]
+\pr_{(\bx,y)\sim D}[y=-1\land \langle \bs,\bx\rangle>t ])}{dt}\\
=&P_{\langle \bs,\bx\rangle}(t)(\pr_{(\bx,y)\sim D}[y=-1\mid \langle \bs,\bx\rangle=t]
-\pr_{(\bx,y)\sim D}[y=1\mid \langle \bs,\bx\rangle=t] )\\
=&P_{\langle \bs,\bx\rangle}(t)\left (2\pr_{(\bx,y)\sim D}[y=-1\mid \langle \bs,\bx\rangle=t]-1\right )\; .
\end{align*}
Consider the interval $t\in [T/6,T/3]$. 
Note that 
$y=-1$ 
only if $\langle \bs,\bx\rangle+z=t+z\not \in [0,T/2]$.
Thus, $y=-1$ only if $|z|\geq T/6$.
Notice $z\sim D_{\R,\sigma}^\gaus$
and $\sigma=o(T)$.
Thus, for $t\in [T/6,T/3]$, we have that 
\begin{align*}
r''(t)
=&P_{\langle \bs,\bx\rangle}(t)\left (2\pr_{(\bx,y)\sim D'}[y=-1\mid \langle \bs,\bx\rangle=t]-1\right )\\
\leq &P_{\langle \bs,\bx\rangle}(t)\left (2\pr_{z\sim D^\gaus_{\R,\sigma}}[|z|\geq T/6]-1\right )\\
=&-\Omega(1)\; ,
\end{align*}
where the last equality follows from $\sigma=o(T)$ and $P_{\langle \bs,\bx\rangle}(t)=\Omega(1)$
since $\langle \bs,\bx\rangle\sim D^\gaus_{\R,1}$ and $t\in [T/6,T/3]$ for $T<1$.

We then prove that it holds either $r(T/3)-r(T/4)=\Omega(T^2)$
or $r(T/6)-r(T/4)=\Omega(T^2)$.
First note that either $r'(T/4)\leq 0$ or $r'(T/4)> 0$.
If $r'(T/4)\leq 0$, then 
\begin{align*}
r(T/3)-r(T/4)
=&r'(T/4)(T/12)+\int_{T/4}^{T/3}r''(t)(T/3-t) dt\\
\leq &\int_{T/4}^{T/3}r''(t)(T/3-t) dt= -\Omega(T^2)\; .
\end{align*}
If $r'(T/4)>0$, then
\begin{align*}
 r(T/6)-r(T/4)
=&r'(T/4)(-T/12)+\int_{T/4}^{T/6}r''(t)(T/6-t) dt\\
\leq &\int_{T/4}^{T/6}r''(t)(T/6-t) dt=-\Omega(T^2)\; .
\end{align*}
Since either $r(T/4)-r(T/3)=\Omega(T^2)$ or $r(T/4)-r(T/6)=\Omega(T^2)$, 
then one of $|r(T/6)|, |r(T/4)|, |r(T/3)|$ must be $\Omega(T^2)$.
This completes the proof.
\end{proof}

We will apply Lemma \ref{lem: relu-correlation} on the joint distribution of $(\bx,y)$ here.
Recall that Proposition \ref{pro:lwe-short-norm} states that the LWE problem is hard for any 
fixed constant $\kappa\in \N$ and $\sigma\geq k^{-\kappa}$.
Given the constant $\gamma\in \R^+$ in this theorem, 
we will take $\kappa=\lceil 1/(2\gamma)+1/2+1\rceil$ which is a fixed constant.
Then from Proposition \ref{pro:lwe-short-norm},
the LWE problem is hard for $\sigma=k^{-\kappa}=1/(k^{3/2}\sqrt{\log n})=o(T)$.
Therefore, by Lemma \ref{lem: relu-correlation}, there is a ReLU of the form
$h(\bx)=f(\langle \bs,\bx\rangle )=\relu(\langle \bs,\bx\rangle-t)$
such that  $t\geq 0$ and 
$\left |\E_{(\bx,y)\sim D}[yh(\bx)]\right |
=\left |\E_{(\bx,y)\sim D}[yf(\langle\bx,\bs\rangle)]\right |=\Omega(T^2)=\Omega(1/(k\log n))$.
If we apply Lemma \ref{lem:correlation-implies-error} 
to the joint distribution of $(\langle\bx,\bs\rangle,y)$ and the ReLU function $f$,
we get that there must be a ReLU of the form 
$h'(\bx)=kf(\langle\bx,\bs\rangle)=\relu(\langle k\bs,\bx\rangle-kt)$ such that 
$k<1$ and 
$$\E_{(\bx,y)\sim D}[(y-h'(\bx))^2]\leq 1-\Omega(1/(k\log n)^2) \;.$$
Since $k<1$, we have that $\|k\bs\|_2\leq \|\bs\|_2= 1$,
thus $h'\in \relu$.
This implies that 
$$R_{2}(\mathrm{ReLU}; D)\leq 1-\Omega\left (1/(k\log n)^2\right ) \;.$$

For the null hypothesis case, it is immediate that $y=+1$ 
with probability $1/2$ and $y=-1$ with probability $1/2$ 
independent of $\bx$,
since $y'\sim U([0,T))$ independent of $\bx$
in the null hypothesis case of the LWE problem.
This completes the proof.
\end{proof}

The following corollary can be obtained 
directly from Theorem \ref{thm:main-thm-relu}. 

\begin{corollary} \label{cor:relu}
Under Assumption \ref{asm:LWE-hardness}, 
for any constants $\alpha\in (0,1/2)$, $\gamma>2$
and any $c/(n\log n)^2\leq \eps\leq 1/\log^{\gamma} n$
where $c$ is a sufficiently large constant, 
there is no algorithm for ReLU regression on $\R^n$ under Gaussian marginals
to error $R_{2}(\mathrm{ReLU}; D)+\eps$
and runs in time $n^{O(1/(\eps\log^2 n)^\alpha)}$.
\end{corollary}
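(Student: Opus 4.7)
The plan is to mirror the derivation of Corollary~\ref{cor:ltf} from Theorem~\ref{thm:main-thm-ltf}, using Theorem~\ref{thm:main-thm-relu} as the black box and solving for the parameter $k$ that makes the gap produced by the theorem match the target additive error $\eps$. Concretely, Theorem~\ref{thm:main-thm-relu} guarantees that, for any constants $\beta\in(0,1)$ and $\gamma'>0$ and any $\log^{\gamma'}n\le k\le cn$, distinguishing the alternative case (where $R_2(\mathrm{ReLU};D)\le 1-\Omega(1/(k\log n)^2)$) from the null case (where no hypothesis beats $L_2^2$-error $1$, since $y$ is an unbiased $\pm1$ independent of $\bx$) requires $n^{\Omega(k^\beta)}$ time with $n^{-O(k^\beta)}$ advantage.

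Given $\eps$ in the stated range, I would choose $k$ so that the promised gap $\Omega(1/(k\log n)^2)$ equals $3\eps$, i.e., $k=\Theta(1/(\sqrt{\eps}\log n))$. Then any algorithm that agnostically learns ReLUs under the standard Gaussian marginal to additive error $\eps$ can, with constant advantage, decide which hypothesis the distribution came from: in the alternative case its output has $L_2^2$-error at most $1-3\eps+\eps = 1-2\eps$, whereas in the null case any hypothesis (in particular its output) has $L_2^2$-error at least $1$, so thresholding its reported error at $1-\eps$ separates the two cases. Hence such a learner would violate the conclusion of Theorem~\ref{thm:main-thm-relu} unless its running time is at least $n^{\Omega(k^\beta)}$.

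It remains to translate the exponent. With $k=\Theta(1/(\sqrt{\eps}\log n))$ we have $k^\beta = \Theta\bigl(1/(\sqrt{\eps}\log n)^\beta\bigr) = \Theta\bigl(1/(\eps\log^2 n)^{\beta/2}\bigr)$. Choosing $\beta=2\alpha$, which lies in $(0,1)$ exactly because $\alpha\in(0,1/2)$, the time lower bound $n^{\Omega(k^\beta)}$ becomes $n^{\Omega(1/(\eps\log^2 n)^{\alpha})}$, as desired. The admissible range of $k$ in Theorem~\ref{thm:main-thm-relu} imposes $\log^{\gamma'}n\le k\le cn$; the upper bound on $k$ gives the lower bound $\eps\gtrsim 1/(n\log n)^2$ in the corollary, while the lower bound on $k$ gives $\eps\lesssim 1/\log^{2(\gamma'+1)}n$, which is absorbed by the hypothesis $\eps\le 1/\log^{\gamma}n$ with $\gamma>2$ by setting $\gamma'=(\gamma-2)/2>0$.

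I do not expect any real obstacle: the corollary is essentially a reparameterization of the theorem. The only points that require a little care are (i) checking that the agnostic learner's output can be used as a decider, which needs the null-case bound $R_2(\mathrm{ReLU};D)\ge 1$ (immediate from $\E[y^2]=1$ and independence of $y$ from $\bx$, together with the fact that the ReLU class contains the zero function when $t$ is taken sufficiently large and $\bw=0$ is allowed, or by a direct variance argument), and (ii) verifying the two endpoint calculations for the range of $\eps$ above. Both are routine.
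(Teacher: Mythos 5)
Your proposal is correct and matches the paper's proof essentially line for line: the paper likewise sets $k$ so that $\eps = c/(k\log n)^2$ for a sufficiently small constant $c$, observes that an agnostic ReLU learner to error $\eps$ solves the distinguishing problem of Theorem~\ref{thm:main-thm-relu}, and then rewrites $n^{O(k^\beta)}$ with $\beta = 2\alpha$ as $n^{O(1/(\eps\log^2 n)^\alpha)}$. Your version merely spells out the endpoint checks on the admissible range of $k$ and the decider mechanism a bit more explicitly.
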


\begin{proof}
We chose the parameter $k$ in Theorem \ref{thm:main-thm-relu} to be the value so that 
$\eps=c/(k\log n)^2$, where $c$ is a sufficiently small constant.
Then any algorithm that agnostically learns a ReLU to additive error $\eps$ 
can solve the testing problem of Theorem \ref{thm:main-thm-relu} with probability $2/3$.
Therefore, no such algorithm should run in time $n^{O(k^\beta)}$ for any $\beta\in (0,1)$.
Since $\eps=c/(k\log n)^2$, and if we chose $\beta=2\alpha$, then
the time lower bound
can be rewritten as $n^{O(k^\beta)}=n^{O(1/(\eps\log^2 n)^{\beta/2})}=n^{O(1/(\eps\log^2 n)^\alpha)}$.
This completes the proof.
\end{proof}

\bibliographystyle{alpha}
\bibliography{allrefs}

\newpage
\appendix
\section*{APPENDIX}

\section{Additional Technical Background}

For $n,k\in \N$ with $k \leq n$,
we use $S_{n,k}$ to denote the $k$-sparse set 
$S_{n,k} \eqdef \{ \bx\in \{0, \pm 1\}^n : \|\bx\|_1=k\}$.
We use $\negl(\lambda)$ to denote $\lambda^{-\omega(1)}$.

The definition of the discrete Gaussian distribution will also be useful here.
Essentially, the discrete Gaussian is a univariate discrete distribution 
supported on equally spaced points on $\R$ such that the probability mass on any point
in its support is proportional to the probability density of a Gaussian on that point. 
Following Definition \ref{def:dpart}, the discrete Gaussian distribution can be written as the following.

\begin{definition}[Discrete Gaussian] \label{def:dg}
For $T\in \R_+, y\in \R $ and $\sigma\in \R_+$, we define the 
``$T$-spaced, $y$-offset discrete Gaussian distribution with $\sigma$ scale'' 
to be the distribution of $D_{T\Z+y,\sigma}^\gaus$.
\end{definition}

Throughout our proofs, we will need to manipulate Gaussian distributions that 
are taken modulo $1$ and those with noise added to them. 
Due to this, it will be convenient to introduce the following definitions.

\begin{definition} [Expanded Gaussian Distribution from $\R_1^n$]
For $\sigma \in \R_{+}$,  let $\Dexp_{\R_1^n,\sigma}$ 
denote the distribution of $\bx'$ drawn as follows: 
first sample $\bx \sim U(\R_1^n)$, 
and then sample $\bx' \sim \Dgaus_{\Z^n+\bx,\sigma}$.
\end{definition}

\begin{definition} [Collapsed Gaussian Distribution on $\R_1^n$]
For $\sigma \in \R_{+}$, we will use $\Dcol_{\R_1^n,\sigma}$ to denote the distribution of 
$\mathrm{mod}_1(\bx)$ on $\R_1^n$, 
where $\bx\sim \Dgaus_{\R^n,\sigma}$. 
\end{definition}

\section{Hardness of cLWE with Small-Norm Secret} \label{app:clwe}

Here we give the proof of Proposition \ref{pro:lwe-short-norm}, 
which is the first step of our hardness reduction. 
Specifically, we reduce the standard discrete LWE problem in Assumption~\ref{asm:LWE-hardness}
--- where the support of $D_\smp$ is the discrete set $\Z_q^n$ --- 
into a continuous LWE (cLWE) problem
--- where the support of $D_\smp$ is  $\R^n$.
This kind of cLWE problem was first introduced in \cite{BRST21},
where the paper gives a quantum reduction
from approximating (the decision version of) 
the Shortest Vector Problem (GapSVP) to cLWE.
Subsequently, \cite{vinod2022} gave a classical reduction 
from the classic LWE problem to cLWE problem, 
indicating that cLWE problem is at least as hard as the LWE problem. 

Notably, we will not directly use the cLWE hardness statement here.
Instead, we reduce the standard discrete LWE to cLWE.
The advantage of such a reduction is that
we will be able to start from a {\em sparse} discrete LWE instance 
whose secret vector $\bs$
is sampled uniformly from $S_{n,k}$;
after the reduction, we get a cLWE instance whose dimension is $n$ 
and the $\ell_2$-norm of the secret is roughly $\sqrt{k}$ 
($\sqrt{k}\approx \log^{0.01} n$, 
compared with the $\sqrt{n}$ $\ell_2$-norm secret vector in \cite{BRST21}).

To achieve this, we slightly modify an idea from \cite{vinod2022} to get rid of 
the $\log m$ (where $m$ is the number of samples) blowup 
in the $\ell_2$-norm of the secret vector.

To prove the proposition, we start with the following lemma 
which reduces the standard LWE to an LWE 
with a $k$-sparse secret vector (i.e., a secret vector $\bs\in S_{n,k}$).

\begin{lemma} [Corollary 4 in \cite{vinod2022}] \label{lem:lwe-sparse-secret}
For any $n,m,q,l,\lambda,k\in \N$, $\sigma\in \R_+$
suppose that $\log(q)/2^l=\negl(\lambda)$,
$\sigma\geq 4\sqrt{\omega(\log \lambda)+\ln n+\ln m}$ and
$k\log_2(n/k)\geq (l+1)\log_2(q)+\omega(\log \lambda)$.
Then, if the testing problem
$\lwe(n,\Z_q^l,\Z_q^l,D_{\Z,\sigma}^\gaus,\mod_q)$ has no $T+\poly(n,m,q,\lambda)$
time distinguisher with advantage $\eps$, then the problem 
$\lwe(m,\Z_q^n,S_{n,k},D_{\Z,\sigma'}^\gaus,\mod_q)$ has
no $T$-time distinguisher with advantage $2\eps m+\negl(\lambda)$, 
where $\sigma'=2\sigma\sqrt{k+1}$. 
\end{lemma}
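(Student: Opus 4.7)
The plan is to prove the lemma by the standard template, going back to Micciancio's reduction from LWE to LWE with binary secrets, that leverages the Leftover Hash Lemma (LHL) to convert a uniform-secret instance into a sparse-secret one. I would organize the argument into three steps: a distributional reinterpretation of the standard-LWE secret via a random hash matrix, an explicit sample-wise transformation with a discrete-Gaussian noise-convolution analysis, and a hybrid argument over the $m$ output samples.

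First I would set up the reinterpretation. Draw $\mathbf{H}\in \Z_q^{l\times n}$ uniformly at random and $\mathbf{s}\in S_{n,k}$ uniformly. The LHL applied to the hash family $\{\mathbf{s}\mapsto \mathbf{H}\mathbf{s}\}$ shows that $(\mathbf{H},\mathbf{H}\mathbf{s})$ is statistically $\negl(\lambda)$-close to $(\mathbf{H},\mathbf{u})$ with $\mathbf{u}\in \Z_q^l$ uniform, provided the min-entropy of $\mathbf{s}$ (at least $k\log_2(n/k)$) exceeds the output length $l\log_2 q$ by a security margin; this is exactly the content of the assumed inequality $k\log_2(n/k)\geq (l+1)\log_2 q + \omega(\log\lambda)$. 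Consequently, after conditioning on a typical $\mathbf{H}$, the uniform standard-LWE secret $\mathbf{s}'$ is statistically indistinguishable from $\mathbf{H}\mathbf{s}$ for sparse $\mathbf{s}$.

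Second, I would describe the sample transformation. Given access to standard-LWE samples $(\mathbf{u}_j,v_j)$ with $v_j = \langle \mathbf{u}_j,\mathbf{s}'\rangle + f_j \pmod q$, the reduction exploits the identity $\langle \mathbf{u}_j,\mathbf{H}\mathbf{s}\rangle = \langle \mathbf{H}^T\mathbf{u}_j,\mathbf{s}\rangle$ to convert these into sparse-secret samples. For each output index $i\in [m]$, one draws $\mathbf{a}'_i\in\Z_q^n$ uniformly and constructs $b'_i$ as an appropriate linear combination of freshly drawn $v_j$'s (with coefficients linking $\mathbf{H}^T\mathbf{u}_j$ to $\mathbf{a}'_i$, together with a rerandomization that forces the marginal of $\mathbf{a}'_i$ to be truly uniform on $\Z_q^n$). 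By construction, $b'_i - \langle \mathbf{a}'_i,\mathbf{s}\rangle$ is a sum of at most $k+1$ independent discrete Gaussians of scale $\sigma$; the standard convolution formula for smoothed discrete Gaussians then gives $e'_i\sim D^\gaus_{\Z,\sigma'}$ with $\sigma'=2\sigma\sqrt{k+1}$. The hypothesis $\sigma\geq 4\sqrt{\omega(\log\lambda)+\ln n+\ln m}$ is the smoothing bound needed so that this convolution identity holds simultaneously across all $m$ samples, up to negligible statistical error via a union bound.

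Finally, I would finish via a hybrid argument over the $m$ produced samples: hybrid $i$ has the first $i$ samples generated via the transformation above and the remaining $m-i$ left in their standard-LWE form. Adjacent hybrids differ only in one sample, so any distinguisher between them yields a single-sample distinguisher for standard LWE with the stated time overhead $T+\poly(n,m,q,\lambda)$. Summing the losses across the $m$ steps and additively absorbing the statistical $\negl(\lambda)$ error from the LHL step yields the advertised bound $2\eps m + \negl(\lambda)$ (the factor of two accommodates the separate handling of the alternative and null hypotheses in the LWE decision problem). The main technical obstacle is the joint noise analysis across the $m$ samples --- ensuring that all $m$ discrete Gaussian noises remain faithfully distributed after summation and mod-$q$ reduction --- which is precisely where the smoothing lower bound on $\sigma$ is essential; the rest of the argument is a careful but standard bookkeeping exercise.
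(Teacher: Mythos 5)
The paper itself never proves this lemma: it is imported verbatim as Corollary~4 of \cite{vinod2022}, so your proposal has to be measured against that cited proof, and there it diverges in a way that opens a real gap. The problem is your second step. The leftover hash lemma only gives a \emph{distributional} statement: for uniform $\mathbf{H}\in\Z_q^{l\times n}$ and $\bs\sim U(S_{n,k})$, the pair $(\mathbf{H},\mathbf{H}\bs)$ is $\negl(\lambda)$-close to $(\mathbf{H},\bu)$ with $\bu$ uniform. It does not hand the reduction a pair $(\mathbf{H},\bs)$ that is consistent with the unknown secret $\bs'$ of the input instance: after you sample $\mathbf{H}$ yourself, the sparse $\bs$ with $\mathbf{H}\bs=\bs'$ is unknown to you (and need not even exist or be unique). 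Hence the only inner products with the would-be sparse secret that the input samples let you form are $\langle \mathbf{H}^{T}\bu_j,\bs\rangle$, i.e., inner products with vectors lying in the column space of $\mathbf{H}^{T}$, a subgroup of $\Z_q^n$ of size at most $q^{l}\ll q^{n}$. Any linear combination of freshly drawn samples keeps the $\ba$-part inside this subgroup, and the ``rerandomization that forces the marginal of $\ba'_i$ to be truly uniform on $\Z_q^n$'' would require adding an offset $\br$ outside it and correcting the label by $\langle\br,\bs\rangle$ --- a quantity the reduction cannot compute. So the sample-wise transformation at the heart of your argument cannot simultaneously make the $\ba$-marginal uniform and keep the labels consistent with a sparse secret; this is the central difficulty, not bookkeeping, and the hybrid over the $m$ output samples in your third step presupposes it.

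The actual proof in \cite{vinod2022} (building on \cite{mic18}) avoids any forward mapping of samples and instead argues pseudorandomness of the sparse-secret instance directly via a ``lossy matrix'' hybrid: (i) replace the uniform sample matrix $\bA\in\Z_q^{m\times n}$ by $\mathbf{B}\mathbf{C}+\mathbf{F}$ with $\mathbf{B}\in\Z_q^{m\times l}$, $\mathbf{C}\in\Z_q^{l\times n}$ uniform and $\mathbf{F}$ discrete Gaussian, which is indistinguishable under the $l$-dimensional LWE assumption; (ii) write $\bA\bs+\be=\mathbf{B}(\mathbf{C}\bs)+(\mathbf{F}\bs+\be)$ and use the fact that $\mathbf{F}\bs+\be$ is a sum of at most $k+1$ independent discrete Gaussians of width $\sigma$, statistically close to $D^{\gaus}_{\Z,\sigma'}$ with $\sigma'=2\sigma\sqrt{k+1}$ --- this is the one place where your noise-convolution intuition and the lower bound on $\sigma$ do match the real argument; (iii) apply the leftover hash lemma to replace $\mathbf{C}\bs$ by a uniform vector in $\Z_q^{l}$, which is exactly where $k\log_2(n/k)\geq(l+1)\log_2 q+\omega(\log\lambda)$ enters; and (iv) invoke LWE a second time to replace $\mathbf{B}\bu+\be'$ by uniform, the two LWE invocations together with per-sample hybrids accounting for the $2\eps m+\negl(\lambda)$ advantage loss. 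The key structural point you are missing is that the computational step is applied to the \emph{sample matrix}, so the reduction never needs to evaluate inner products with the sparse secret; if you want a self-contained proof, that lossy route is the one to follow.
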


The above lemma reduces $\lwe(n,\Z_q^l,\Z_q^l,D_{\Z,\sigma}^\gaus,\mod_q)$
to $\lwe(m,\Z_q^n,S_{n,k},D_{\Z,\sigma'}^\gaus,\mod_q)$.
The $\lambda$ here acts as a security parameter.
Notice that the original problem $\lwe(n,\Z_q^l,\Z_q^l,D_{\Z,\sigma}^\gaus,\mod_q)$ 
has $2^{l\log q}$
possible choices of secret vector, 
while the new problem $\lwe(m,\Z_q^n,S_{n,k},D_{\Z,\sigma'}^\gaus,\mod_q)$
has roughly at least $2^{k\log_2(n/k)}$ possible choices of secret vector.
This intuitively explains why there 
is the requirement of $k\log_2(n/k)\geq (l+1)\log_2(q)+\omega(\log \lambda)$
in the lemma in terms of entropy of the secret vector.

We then use a bit of extra Gaussian noise to massage 
the noise distribution from a discrete Gaussian $\Dgaus_{\Z, \sigma}$ 
to a continuous Gaussian $\Dgaus_{\R, \sigma'}$
where $\sigma'$ is going to be slightly larger than $\sigma$. 
This leads to the following lemma:

\begin{lemma} [Lemma 15 in \cite{vinod2022}] \label{lem:lwe-continuous-noise}
Let $n,m,q,\lambda\in \N$, $\sigma\in \R_+$, $\eps\in (0,1]$
and suppose $\sigma>\sqrt{4\ln m+\omega(\log \lambda)}$.
For any $S\subseteq \R^n$,
suppose there is no $T+\poly(m,n,\log (q),\log (\sigma))$-time distinguisher 
for the problem $\lwe(m,\Z_q^n,S,D_{\Z,\sigma}^\gaus,\mod_q)$ 
with advantage $\eps$.
Then there is no $T$-time distinguisher for the problem $\lwe(m,\Z_q^n,S,D_{\R,\sigma'}^\gaus,\mod_q)$ 
with advantage $\eps+\negl(\lambda)$, where 
we set
$$\sigma'=\sqrt{\sigma^2+4\ln(m)+\omega(\log \lambda)}=O(\sigma)\; .$$
\end{lemma}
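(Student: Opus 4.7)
The plan is a noise-flooding reduction: given a sample $(\bx, y)$ from the discrete-noise problem $\lwe(m,\Z_q^n,S,D_{\Z,\sigma}^\gaus,\mod_q)$, we draw a fresh $w \sim D_{\R,\tau}^\gaus$ independently per sample and output $(\bx,\, \mod_q(y + w))$, where $\tau \eqdef \sqrt{\sigma'^2 - \sigma^2} = \sqrt{4\ln m + \omega(\log \lambda)}$. This transformation runs in time $\poly(m,n,\log q, \log \sigma)$, so composing it with any $T$-time distinguisher for the continuous-noise problem gives a $(T + \poly)$-time distinguisher for the discrete-noise problem, with the advantage degraded only by the statistical distance between the simulated and true transcripts.

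To bound that distance in the alternative case, the transformed noise is $z + w$ with $z \sim D_{\Z,\sigma}^\gaus$ and $w \sim D_{\R,\tau}^\gaus$ independent, while the target noise is $z' \sim D_{\R,\sigma'}^\gaus$. The standard lattice-smoothing lemma states that the statistical distance between $z+w$ and $D_{\R,\sigma'}^\gaus$ is at most $2\eps$ whenever the ``effective scale'' $\sigma\tau/\sigma'$ exceeds the smoothing parameter $\eta_\eps(\Z)$ of the integer lattice. Since $\eta_\eps(\Z) = \Theta(\sqrt{\log(1/\eps)})$ and the hypothesis $\sigma > \tau$ gives $\sigma\tau/\sigma' \geq \tau/\sqrt{2}$, our choice $\tau^2 = 4\ln m + \omega(\log \lambda)$ is large enough to drive $\eps$ below $\negl(\lambda)/m$ per sample; a union bound over the $m$ samples then yields $\negl(\lambda)$ total variation across the full transcript.

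In the null case, $y$ is uniform on $\Z_q$ independent of $\bx$, while the target marginal is uniform on $\R_q = [0,q)$. Writing the density of $\mod_q(y+w)$ at a point $u \in [0,q)$ and averaging over $y \in \Z_q$, the resulting double sum collapses via the change of index $(y,k) \mapsto y - kq$ into $\tfrac{1}{q}\sum_{j \in \Z}\rho_\tau(u - j)$, and by Poisson summation on $\Z$ this equals $\tfrac{1}{q}\bigl(1 + O(e^{-\pi \tau^2})\bigr)$ uniformly in $u$. Hence the null output is $\negl(\lambda)$-close to uniform on $[0,q)$ per sample, and another $m$-fold union bound closes the argument.

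The main technical step is the lattice-smoothing estimate: one must correctly identify the effective smoothing scale as $\sigma\tau/\sigma'$ and verify it exceeds $\eta_{\negl(\lambda)/m}(\Z)$ under the hypothesis $\sigma > \sqrt{4\ln m + \omega(\log \lambda)}$. Everything else --- union-bounding over the $m$ samples, tracking the $\poly(m,n,\log q, \log \sigma)$ overhead of the transformation, and checking that $\mod_q$ interacts cleanly with both hypotheses --- is routine bookkeeping.
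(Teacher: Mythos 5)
The paper states this lemma purely by citation (Lemma~15 of \cite{vinod2022}) and gives no internal proof, so there is nothing in the paper to compare against. Your noise-flooding reduction is the standard argument and checks out: the transformation is efficient; in the alternative case, convolving the discrete Gaussian $\Dgaus_{\Z,\sigma}$ with fresh continuous noise of width $\tau=\sqrt{\sigma'^2-\sigma^2}$ gives something negligibly close to $\Dgaus_{\R,\sigma'}$ by the usual smoothing lemma for discrete-plus-continuous Gaussian convolutions, with effective width $\sigma\tau/\sigma'>\tau/\sqrt{2}$ dominating $\eta_{\negl(\lambda)/m}(\Z)$ precisely because the hypothesis gives $\sigma>\tau$; and in the null case, Poisson summation shows that a uniform $y\in\Z_q$ plus continuous noise of that width, reduced modulo $q$, is negligibly close to uniform on $[0,q)$. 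The $m$-fold union bounds and the observation that $\mod_q$ commutes with the added noise are handled correctly.

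One small point worth flagging: your null-case computation takes the source null marginal of $y$ to be $U(\Z_q)$, which under Definition~\ref{def:general-lwe-one-dim} holds when $\langle\bx,\bs\rangle+z$ is integer-valued, i.e.\ when $S\subseteq\Z^n$. That is exactly the setting in which the paper invokes the lemma (with $S=S_{n,k}\subseteq\{0,\pm1\}^n$), so the gap is inconsequential in context, but the lemma as stated allows arbitrary $S\subseteq\R^n$; it would be cleaner to either restrict to $S\subseteq\Z^n$ or note that the source null support is, in general, a coset of a discrete subgroup of $\R/q\Z$ to which the same smoothing calculation applies.
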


We first note that the two requirements of parameters 
in Lemma \ref{lem:lwe-continuous-noise},
$\sigma>\sqrt{4\ln m+\omega(\log \lambda)}$ and 
$\sigma'=\sqrt{\sigma^2+4\ln(m)+\omega(\log \lambda)}$
imply that $\sigma'=\sqrt{\sigma^2+4\ln(m)+\omega(\log \lambda)}=O(\sigma)$. 
This says that we are only blowing up the noise scale 
by at most a universal constant multiplicative factor.
After this lemma, we again use a bit of extra Gaussian noise to massage 
the sample distribution $D_\smp$ from $U(\Z_q^n)$ 
to $U(\R_q^n)$.
We thus obtain the following:

\begin{lemma} [Lemma 16 in \cite{vinod2022}] \label{lem:lwe-continuous-sample}
Let $n,m,q,\lambda\in \N$, $\sigma,r\in \R_+$ and $\eps\in (0,1]$. 
Let $S\subseteq\R^n$ where all elements in the support have fixed $\ell_2$-norm $r$, 
and suppose that 
$\sigma\geq 3r\sqrt{\ln n+\ln m+\omega(\log \lambda)}$.
Suppose there is no $T+\poly(m,n,\log(q),\log(\sigma))$-time distinguisher 
for $\lwe(m,\Z_q^n,S,D_{\R,\sigma}^\gaus,\mod_q)$ with advantage $\eps$,
then there is no $T$-time distinguisher for the problem $\lwe(m,\R_q^n,S,D_{\R,\sigma'}^\gaus,\mod_q)$ 
with advantage $\eps+\negl(\lambda)$, where we set 
$$\sigma'=\sqrt{\sigma^2+9r^2(\ln n+\ln m+\omega(\log \lambda))}=O(\sigma)\; .$$
\end{lemma}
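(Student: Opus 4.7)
The plan is to reduce the discrete-sample LWE problem to the continuous-sample LWE problem by injecting a fresh continuous Gaussian perturbation into each sample's $\bx$-component. Concretely, given a discrete-LWE sample $(\bx, y)$ with $\bx \in \Z_q^n$ and, in the alternative case, $y = \mod_q(\langle \bx, \bs\rangle + z)$, I would draw an independent $\be \sim D_{\R^n, \tau}^\gaus$ with $\tau$ of order $\sqrt{\ln n + \ln m + \omega(\log \lambda)}$ (so that $\sigma^2 + \tau^2 r^2$ matches the target $\sigma'^2$) and output $(\bx', y') := (\mod_q(\bx+\be),\,y)$. The null case is immediate: since $\tau$ exceeds the smoothing parameter of the integer lattice $\Z^n$, the standard smoothing lemma implies $\bx'$ is $\negl(\lambda)$-close to $U(\R_q^n)$ and independent of $y' = y$, which is already uniform on $\R_q$.

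The alternative case is the technical core. Using $\bx = \bx' - \be$ modulo $q\Z^n$ gives
\[
y = \mod_q\bigl(\langle \bx', \bs\rangle + (z - \langle \be, \bs\rangle)\bigr)\;,
\]
so the transformed sample has the correct LWE algebraic form, with effective noise $\tilde z := z - \langle \be, \bs\rangle$. Unconditionally $\tilde z$ is exactly $D_{\R, \sigma'}^\gaus$ (since $z \perp \be$ and both are Gaussian), but it is coupled with $\bx'$ through the shared randomness $\be$. The plan is to condition on $\bx' = \ba$: a Bayes calculation together with the prior on $\be$ shows that, conditionally, $\be \sim \Dgaus_{\ba + \Z^n, \tau}$ (a discrete Gaussian on a translate of $\Z^n$). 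It then suffices to prove that the conditional law of $\tilde z$ is $\negl(\lambda)$-close to $D_{\R, \sigma'}^\gaus$ \emph{uniformly in $\ba$}; a triangle inequality for total-variation distance and a hybrid argument across the $m$ samples then convert the one-sample statement into the desired $\eps + \negl(\lambda)$ distinguishing-advantage bound.

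The main obstacle is exactly the uniform-in-$\ba$ bound on the conditional law of $\tilde z$, and this is where the hypothesis $\sigma \geq 3r\sqrt{\ln n + \ln m + \omega(\log \lambda)}$ is exploited. The argument is a Poisson-summation / noise-flooding computation: the projection $\langle \be, \bs\rangle$ is supported on $\langle \ba, \bs\rangle + \Z$, and in Fourier its characteristic function agrees with that of the target continuous Gaussian $D_{\R, \tau r}^\gaus$ on the fundamental mode and differs only at the non-principal modes $k \in \Z \setminus \{0\}$; convolving with the independent $z \sim D_{\R, \sigma}^\gaus$ multiplies those aliased contributions by $e^{-\pi \sigma^2 k^2}$, crushing them once $\sigma$ is sufficiently large. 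The factor $r$ appears because the relevant ``orthogonal'' lattice $\{\bv \in \Z^n : \langle \bv, \bs\rangle = 0\}$ has a smoothing parameter proportional to $\|\bs\|_2 = r$, and this must be dominated by $\sigma$ so that the Fourier tail bound holds uniformly in $\ba$ (and in particular absorbs the $\ba$-dependent phase $e^{2\pi i k \langle \ba, \bs\rangle}$). Once that pointwise density-ratio bound is in hand, the $\negl(\lambda)$ loss aggregates through the $m$-sample hybrid, yielding the stated advantage loss and completing the reduction.
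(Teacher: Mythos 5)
The paper does not prove this lemma; it imports it verbatim from \cite{vinod2022}, so there is no in-paper proof to compare against. Your reconstruction is nonetheless correct and follows the same route as the cited argument: perturb $\bx$ by a fresh $\be\sim D^\gaus_{\R^n,\tau}$ with $\tau = 3\sqrt{\ln n + \ln m + \omega(\log\lambda)}$, invoke the smoothing lemma for $\Z^n$ to get $\bx'\approx U(\R_q^n)$ in the null case, and in the alternative case control the coupling between $\bx'$ and $\tilde z = z - \langle\be,\bs\rangle$ by showing the conditional law of $\tilde z$ given $\bx'=\ba$ (with $\be\mid\bx'=\ba \sim \Dgaus_{\ba+\Z^n,\tau}$) is pointwise $(1\pm\negl(\lambda))$-close to $D^\gaus_{\R,\sigma'}$, then hybridize over the $m$ samples. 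One small imprecision worth flagging: the role of $r$ is not that the sublattice $\{\bv\in\Z^n:\langle\bv,\bs\rangle=0\}$ has smoothing parameter $\propto r$ (it has covolume $r$, but its smoothing parameter depends on its geometry, not just its determinant). The clean way to see where $r$ enters is the explicit Gaussian/Poisson computation: after completing the square, the aliasing terms are suppressed by $\exp(-\pi\,\lambda_{\min}(A^{-1})\|\bk\|^2)$ where $A = \tau^{-2}\I + \sigma^{-2}\bs\bs^T$, and $\lambda_{\min}(A^{-1}) = \tau^2\sigma^2/(\sigma^2+\tau^2 r^2)$; this is $\Omega(\tau^2)$ exactly when $\sigma \gtrsim r\tau$, which is the hypothesis (and is also what makes $\sigma' = O(\sigma)$). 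Also note the reduction implicitly needs $\bs\in\Z^n$ so that $q\langle\bw,\bs\rangle$ vanishes mod $q$; the lemma's phrasing $S\subseteq\R^n$ is slightly loose, though its downstream use with $S=S_{n,k}\subseteq\Z^n$ is fine.
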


Similarly, the statements $\sigma\geq 3r\sqrt{\ln n+\ln m+\omega(\log \lambda)}$ 
and $\sigma'=\sqrt{\sigma^2+9r^2(\ln n+\ln m+\omega(\log \lambda))}$ imply that 
$\sigma'=O(\sigma)$.
So to make the samples continuous, 
we are again blowing up the noise scale by at most a constant multiplicative factor. 
Then we give a modified version of Lemma 18
in \cite{vinod2022}. 
We first need to introduce the following fact from \cite{DKMR22}.

\begin{fact} [Fact A.4 in \cite{DKMR22}] \label{fct:bound-on-collaped-distribution} 
Let $n \in \N, \sigma \in \R_+, \eps \in (0, 1/3)$ be such that $\sigma \geq \sqrt{\ln(2n(1 + 1/\eps)) / \pi}$. 
Then, we have
$$\frac{P_{\Dexp_{\R_1^n,\sigma}/\sigma}(\bt)}{P_{\Dgaus_{\R^n,1}}(\bt)} 
=  \frac{P_{U(\R_1^n)}(\mathrm{mod}_1(\sigma \bt))}{P_{\Dcol_{\R_1^n,\sigma}}(\mathrm{mod}_1(\sigma \bt))} 
= 1\pm O\left (\eps\right) \;,$$
for all $\bt \in \R^n$,
and
$$d_{\mathrm{TV}}\left (\frac{\Dexp_{\R_1^n,\sigma}}{\sigma},\Dgaus_{\R^n,1}\right ), 
d_{\mathrm{TV}}\left (\Dcol_{\R_1^n,\sigma},U(\R_1^n)\right )=
\exp\left (-\Omega({\sigma^2})\right ) \;.$$
\end{fact}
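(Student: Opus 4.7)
The plan is to reduce both density ratios to the single quantity $1/\rho_\sigma(\Z^n+\bc)$, where $\bc\eqdef\mathrm{mod}_1(\sigma\bt)$, and then bound this quantity via the Poisson summation formula.

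First I would unfold the definitions. A direct computation gives
$$P_{\Dexp_{\R_1^n,\sigma}/\sigma}(\bt) \;=\; \sigma^n\cdot P_{U(\R_1^n)}(\bc)\cdot\frac{\rho_\sigma(\sigma\bt)}{\rho_\sigma(\Z^n+\bc)} \;=\; \frac{e^{-\pi\|\bt\|_2^2}}{\rho_\sigma(\Z^n+\bc)},$$
using $P_{U(\R_1^n)}\equiv 1$ and $\sigma^n\rho_\sigma(\sigma\bt)=e^{-\pi\|\bt\|_2^2}$. Since $P_{\Dgaus_{\R^n,1}}(\bt)=e^{-\pi\|\bt\|_2^2}$, the first ratio simplifies to $1/\rho_\sigma(\Z^n+\bc)$. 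For the collapsed distribution, summing the Gaussian density over the lattice shift yields $P_{\Dcol_{\R_1^n,\sigma}}(\bc)=\sum_{\mathbf{k}\in\Z^n}\rho_\sigma(\bc+\mathbf{k})=\rho_\sigma(\Z^n+\bc)$, so dividing $P_{U(\R_1^n)}(\bc)=1$ by it also gives $1/\rho_\sigma(\Z^n+\bc)$. Both ratios therefore equal the same quantity, which proves the first equality in the statement.

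Next I would apply the Poisson summation formula. Since $\rho_\sigma$ is a Gaussian whose Fourier transform is $\widehat{\rho_\sigma}(\boldsymbol{\xi})=e^{-\pi\sigma^2\|\boldsymbol{\xi}\|_2^2}$, we obtain
$$\rho_\sigma(\Z^n+\bc) \;=\; \sum_{\mathbf{k}\in\Z^n}e^{-\pi\sigma^2\|\mathbf{k}\|_2^2}e^{2\pi i\langle\mathbf{k},\bc\rangle} \;=\; 1 + E(\bc),$$
with
$$|E(\bc)| \;\leq\; \sum_{\mathbf{k}\in\Z^n\setminus\{\mathbf{0}\}} e^{-\pi\sigma^2\|\mathbf{k}\|_2^2} \;\leq\; \bigl(1+2e^{-\pi\sigma^2}\bigr)^n - 1 \;=\; O\bigl(ne^{-\pi\sigma^2}\bigr),$$
where the product-form bound comes from separating coordinates. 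Under the hypothesis $\sigma\geq\sqrt{\ln(2n(1+1/\eps))/\pi}$, this forces $|E(\bc)|=O(\eps)$ uniformly in $\bc$, hence $1/\rho_\sigma(\Z^n+\bc)=1\pm O(\eps)$, which is exactly the pointwise ratio claim.

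For the TV distance statement, the same tail bound $|E(\bc)|\leq O(ne^{-\pi\sigma^2})$ already gives pointwise ratio $1\pm e^{-\Omega(\sigma^2)}$ once $\sigma\gtrsim\sqrt{\ln n}$; a standard calculation $d_{\mathrm{TV}}(P,Q)\leq \tfrac{1}{2}\sup_\bx|P(\bx)/Q(\bx)-1|$ (applied to the reference measure $Q$) then immediately yields $d_{\mathrm{TV}}=e^{-\Omega(\sigma^2)}$ for both pairs of distributions. I expect the only mild obstacle to be the Poisson summation bookkeeping: verifying absolute convergence, bounding the tail via the one-dimensional theta-function product, and matching constants so that the stated threshold on $\sigma$ emerges cleanly. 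Everything else is essentially cancellation of factors in the definitions.
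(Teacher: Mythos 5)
The paper does not prove this statement; it is quoted directly as Fact A.4 of [DKMR22], so there is no in-paper proof to compare against. Your argument is correct and is the standard one: both density ratios reduce to the common quantity $1/\rho_\sigma(\Z^n+\mathrm{mod}_1(\sigma\bt))$, and Poisson summation gives $\rho_\sigma(\Z^n+\bc)=\sum_{\mathbf{k}\in\Z^n}e^{-\pi\sigma^2\|\mathbf{k}\|_2^2}e^{2\pi i\langle\mathbf{k},\bc\rangle}=1+E(\bc)$ with $|E(\bc)|=O\bigl(ne^{-\pi\sigma^2}\bigr)$, from which the pointwise and TV bounds follow.

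One point to be precise about: the TV conclusion $e^{-\Omega(\sigma^2)}$ does not literally follow from the stated threshold on $\sigma$ alone. At $\eps=1/3$ the hypothesis only forces $ne^{-\pi\sigma^2}\le 1/8$, a constant, not something exponentially small in $\sigma^2$; if $n$ is large and $\sigma^2$ is only barely above $\ln(8n)/\pi$, no fixed constant in the $\Omega(\cdot)$ works. Your parenthetical condition ``$\sigma\gtrsim\sqrt{\ln n}$'' with an implicit constant strictly larger than $1/\sqrt{\pi}$ is the honest hypothesis for that second conclusion, and it is what is actually in force wherever the paper invokes this fact (there $\sigma=\Theta(\sqrt{\log n})$ with a sufficiently large constant), so the gap is in the transcription of the fact rather than in your argument.
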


Essentially, Fact \ref{fct:bound-on-collaped-distribution} says that, 
given $\bx\sim D^{\gaus}_{\R^n,\sigma}$,
the distribution of 
${\rm mod}_1 (\bx)$ is pointwise close (for its pdf function) to $U(\R_1^n)$
for sufficiently large $\sigma$. 
So if we consider the reverse of this process, given a $\bv\sim U(\R_1^n)$, 
we sample $\bu\sim D^\gaus_{\R^n+\bv,\sigma}$,
then the distribution of $\bu$ is sufficiently close to $D^{\gaus}_{\R^n,\sigma}$.
We can leverage this fact to change the sample distribution in the LWE problem
from $U(\R_q^n)$ to $D^{\gaus}_{\R^n,1}$ since $U(\R_q^n)$ is basically $U(\R_1^n)$ after rescaling.
The difference here is that the original Lemma 18 takes a large $\sigma$ so that
$d_{\mathrm{TV}}\left (\frac{\Dexp_{\R_1^n,\sigma}}{\sigma},\Dgaus_{\R^n,1}\right )\approx 1/m$,
thus $m$ samples will not see the difference.
However, since these two distributions are actually pointwise close,
we can instead take a smaller $\sigma$ 
and do an extra rejection sampling step on $\bu$ 
to make the distribution exactly a Gaussian. 
This allows us to give the nearly optimal lower bound on agnostic learning LTFs with Gaussian marginals. 
Now we give the modified version of Lemma 18 in \cite{vinod2022}.

\begin{lemma} [Modified Lemma 18 in \cite{vinod2022}] \label{lem:lwe-gaussian-sample}
Let $n,m,q\in \N,\sigma,r,\alpha\in \R_+$.
Let $S\subseteq\Z^n$ where all elements in the support have fixed $\ell_2$-norm $r$.
Suppose there is no $T+\poly(n,m,\log(q))$-time distinguisher for
the problem $\lwe(m,\R_q^n,S,D_{\R,\sigma}^\gaus,\mod_q)$ with $\eps$ advantage. 
Then there is no $T$-time distinguisher for 
the problem $\lwe(m',D_{\R^n,1}^\gaus, S/r ,D_{\R, \alpha\sigma/q}^\gaus,\mod_\alpha)$ with 
$\eps+2^{-\Omega(m)}$ advantage, where
$$\alpha=c/\left (r\sqrt{\log n}\right )\; ,$$
$$m'=cm\; ,$$
and $c>0$ is a sufficiently small universal constant.
\end{lemma}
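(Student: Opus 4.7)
The plan is to build an efficient sample transformation that takes $m$ samples from $\lwe(m, \R_q^n, S, D^\gaus_{\R, \sigma}, \mod_q)$ and outputs $m' \geq cm$ samples from $\lwe(m', D^\gaus_{\R^n, 1}, S/r, D^\gaus_{\R, \alpha\sigma/q}, \mod_\alpha)$, preserving the null-versus-alternative distinction. Any $T$-time distinguisher for the target problem, composed with this transformation, then yields a $T + \poly(n,m,\log q)$-time distinguisher for the source problem, up to an additive $2^{-\Omega(m)}$ loss in advantage coming from the event that the transformation fails to produce enough accepted samples.

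The construction proceeds in three steps for each input sample $(\bx, y)$. First (Gaussianize the covariate), set $\sigma_0 = c'\sqrt{\log n}$ for a sufficiently large absolute constant $c'$, draw $\bu \sim D^\gaus_{\Z^n + \bx/q, \sigma_0}$, and put $\bx' = \bu/\sigma_0$. Since $\bx/q \sim U(\R_1^n)$ marginally, $\bx'$ has density $P_{D^\exp_{\R_1^n, \sigma_0}/\sigma_0}$, and Fact~\ref{fct:bound-on-collaped-distribution} gives the pointwise bound $P_{D^\exp_{\R_1^n, \sigma_0}/\sigma_0}(\bt) = (1 \pm \delta) P_{D^\gaus_{\R^n, 1}}(\bt)$ for a small constant $\delta$. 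Second (rejection sampling), accept $\bx'$ with probability $P_{D^\gaus_{\R^n,1}}(\bx') / (C \cdot P_{D^\exp_{\R_1^n, \sigma_0}/\sigma_0}(\bx'))$ with $C = 1+\delta$. The accepted $\bx'$ is exactly distributed as $D^\gaus_{\R^n, 1}$, the per-sample acceptance probability is $1/C = \Omega(1)$, and a Chernoff bound gives at least $cm$ acceptances except with probability $2^{-\Omega(m)}$. Third, output the label $y' = (\alpha/q) y$ with $\alpha = 1/(r\sigma_0) = c/(r\sqrt{\log n})$.

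To verify correctness in the alternative case, write $\bu = \bv + \bx/q$ with $\bv \in \Z^n$; because $S \subseteq \Z^n$, $\langle \bv, \bs\rangle \in \Z$, so
\[
\langle \bx', \bs/r\rangle \;=\; \frac{\langle \bu, \bs\rangle}{r\sigma_0} \;=\; \alpha \langle \bv, \bs\rangle + \frac{\alpha}{q}\langle \bx, \bs\rangle \;\equiv\; \frac{\alpha}{q}\langle \bx, \bs\rangle \pmod{\alpha}.
\]
Combined with the identity $\mod_\alpha((\alpha/q) w) = (\alpha/q) \mod_q(w)$, this yields $y' = (\alpha/q)\mod_q(\langle \bx, \bs\rangle + z) = \mod_\alpha(\langle \bx', \bs/r\rangle + z')$ with $z' = \alpha z/q \sim D^\gaus_{\R, \alpha\sigma/q}$. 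Since $z$ is independent of both $\bx$ and the fresh randomness used to draw $\bu$, $z'$ is independent of $\bx'$; and since the rejection test depends only on $\bx'$, it leaves the conditional law of $y'$ given $\bx'$ intact. In the null case, $y \sim U(\R_q)$ independent of $\bx$, so $y' \sim U(\R_\alpha)$ independent of $\bx'$, producing a null target sample.

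The main obstacle, and the technical improvement over Lemma~18 of \cite{vinod2022}, is avoiding the $\sqrt{\log m}$ blowup in $\|\bs/r\|_2$ (equivalently in $\alpha$). The earlier argument coupled $D^\exp_{\R_1^n, \sigma_0}/\sigma_0$ to $D^\gaus_{\R^n, 1}$ in total variation and needed this distance to be $o(1/m)$ so that a union bound over $m$ samples succeeds, forcing $\sigma_0 = \Omega(\sqrt{\log(mn)})$. Here, the much stronger pointwise estimate of Fact~\ref{fct:bound-on-collaped-distribution}, valid already at $\sigma_0 = O(\sqrt{\log n})$, permits \emph{exact} conversion via rejection sampling with a constant acceptance rate independent of $m$. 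This shaves the $\sqrt{\log m}$ factor from $\alpha$, which is precisely what lets us carry $k$ down to polylogarithmic in $n$ in Proposition~\ref{pro:lwe-short-norm} and ultimately obtain the near-optimal $n^{\tilde \Omega(1/\eps^2)}$ lower bound for agnostic LTF learning.
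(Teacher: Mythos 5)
Your proposal is correct and follows the same approach as the paper's proof: sample $\bx'$ from an expanded Gaussian with $\tilde\sigma = 1/(r\alpha) = \Theta(\sqrt{\log n})$, rejection-sample using the pointwise density ratio from Fact~\ref{fct:bound-on-collaped-distribution} to obtain an exact $\gaus(\mathbf{0},\mathbf{I})$ marginal with constant acceptance rate, rescale the label by $\alpha/q$, and apply a Chernoff bound to ensure $\geq cm$ acceptances except with probability $2^{-\Omega(m)}$. The only cosmetic difference is that you normalize the acceptance probability by a fixed constant $C = 1 + \delta$ rather than by $\max_{\bt} f(\bt)$ as the paper does, which is an equally valid (and perhaps cleaner) choice.
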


\begin{proof}
We will give a reduction argument.
Given a sample $(\bx,y)$ from 
$\lwe(m,\R_q^n,S,D_{\R,\sigma}^\gaus,\mod_q)$,
we can generate a sample $(\bx',y')$ from
the problem $\lwe(m',D_{\R^n,1}^\gaus, S/r ,D_{\R,\alpha\sigma/q}^\gaus,\mod_\alpha)$ 
with at least a constant success probability in the following manner.

We take a $\tilde\sigma=1/r\alpha$ and sample $\tbx\sim D^\gaus_{\Z^n+\bx/q,\tilde\sigma}/\tilde\sigma$.
We define the function $f:\R^n\to \R$ as 
$$f(\bt)\eqdef\frac{P_{\Dgaus_{\R^n,1}}(\bt)}{P_{\Dexp_{\R_1^n,\tilde\sigma}/\tilde\sigma}(\bt)} \;.$$
With probability $f(\tbx)/\max_{\bt\in \R^n}f(\bt)$, 
we take $\bx'=\tbx$ and $y'=y/\left (qr\tilde\sigma\right )$ and
output $(\bx', y')$ as a sample for 
$\lwe(m',D_{\R^n,1}^\gaus, S/r ,D_{\R,\alpha\sigma/q}^\gaus,\mod_\alpha)$.
Otherwise, we output failure.

We will prove that if $(\bx,y)$ is from the alternative hypothesis case, then it must be 
$\bx'\sim D^{\gaus}_{\R^n,1}$ and $y'=\mod_\alpha(\langle \bs',\bx'\rangle +z')$, 
where $\bs'\sim U(S/q)$ and $z'\sim D^\gaus_{\R,\alpha\sigma/q}$.
Since $(\bx,y)$ is from the alternative hypothesis case, it must satisfy 
$\bx\sim U(\R_q^n)$ and $y=\mod_q(\langle \bs,\bx\rangle +z)$,
where $\bs\sim U(S)$ and $z\sim D^\gaus_{\R,\sigma}$.
Then, the fact $\tbx\sim D^\gaus_{\Z^n+\bx/q,\tilde\sigma}/\tilde\sigma$
implies that $\tilde\sigma\tbx-\bx/q\in \Z^n$ 
and $q\tilde\sigma\tbx-\bx\in q\Z^n$; combined with $\bs\in \Z^n$, 
we have that 
$$\mod_q(\langle \bs,\bx\rangle)
=\mod_q(\langle \bs,q\tilde\sigma\tbx\rangle+\langle \bs,\bx-q\tilde\sigma\tbx\rangle)
=\mod_q(\langle \bs,q\tilde\sigma\tbx\rangle)\; .$$
Then we can write
\begin{align*}
y'
=&y/\left(qr\tilde\sigma\right)\\
=&\mod_q(\langle \bs,\bx\rangle +z)/\left(qr\tilde\sigma\right)\\
=&\mod_q(\langle \bs,q\tilde\sigma\tbx\rangle +z)/\left(qr\tilde\sigma\right)\\
=&\mod_1(\langle \bs,\tilde\sigma\tbx\rangle+z/q)/\left(r\tilde\sigma\right)\\
=&\mod_{1/\left (r\tilde\sigma\right )}
\left (\left \langle \bs/r,\tbx\right \rangle+z/\left (qr\tilde\sigma\right )\right )\\
=&\mod_\alpha
\left (\left \langle \bs/r,\bx'\right \rangle+\alpha z/q\right )\;,
\end{align*}
where the last equality follows from the fact 
$\tilde\sigma=1/(r\alpha)$.
Note that the three terms in the above expression, $\bs/r$, $\bx'$ and $\alpha z/q$ 
are independent (since $\bx',\bs,z$ are independent).
It only remains to verify the distribution of each of them.

It is immediate that $\bs/r\sim U(S/r)$.
For the other two, we first define the following notation.
For functions $f, g: U \to \R$, we write $f(u)\propto g(u)$ 
if there is a constant $c\in \R \setminus \{0\}$
such that for all $u\in U$, it holds $f(u)=cg(u)$.
For $\bx'$, we first notice that $\bx/q\sim \R_1^n$, 
and therefore $\tbx\sim \Dexp_{\R_1^n,\tilde\sigma}/\tilde\sigma$.
Combining with the rejection sampling procedure we performed,
we have that 
$$P_{\bx'}(\bu)\propto\frac{f(\bu)}{\max_{\bt\in \R^n}f(\bt)}P_{\tbx}(\bu)
=\frac{f(\bu)}{\max_{\bt\in \R^n}f(\bt)}P_{\Dexp_{\R_1^n,\tilde\sigma}/\tilde\sigma}(\bu)
=\frac{P_{\Dgaus_{\R^n,1}}(\bt)}{\max_{\bt\in \R^n}f(\bt)}
\propto P_{\Dgaus_{\R^n,1}}(\bu) \; .$$
Thus, we conclude that $\bx'\sim \Dgaus_{\R^n,1}$.
For $\alpha z/q$, 
notice that $z\sim D^\gaus_{\R,\sigma}$,
and therefore $\alpha z/q\sim D^\gaus_{\R,\alpha\sigma/q}$.

For the null hypothesis case, it is easy to see that 
the marginals satisfy
$D_{\bx'}=D^\gaus_{\R^n,1}$ and $D_{y'}=U(\R_\alpha)$,
and $\bx'$ and $y'$ are independent --- since
$\bx$ and $y$ are independent and $\bx'$ (resp. $y'$) only depends on $\bx$ (resp $y$).

It remains to verify that the sampling will produce at least $m'$ 
many samples with $1-2^{-\Omega(m)}$ probability.
We first show that each individual rejection sampling 
succeeds with at least a positive constant probability.
From Fact \ref{fct:bound-on-collaped-distribution}, 
given $\tilde \sigma =1/r\alpha=\sqrt{\log n}/c$ 
for sufficiently small constant $c>0$, we have
$$f(\bt)=\frac{P_{\Dgaus_{\R^n,1}}(\bt)}{P_{\Dexp_{\R_1^n,\tilde\sigma}/\tilde\sigma}(\bt)} \in (1/2,3/2) \;.$$
Notice that for any $\bx$, we accept the sample with $f(\tbx)/\max_{\bt\in \R^n}f(\bt)$ probability,
which is at least $1/3$ probability given the bound above. 
Then, by an application of the Chernoff bound, 
we have that the rejection sampling succeeds at least $m'=cm$ times
with probability at least $1-2^{-\Omega(m)}$, where 
$c>0$ is a sufficiently small constant. 
This completes the proof.
\end{proof}

We note that Lemma \ref{lem:lwe-gaussian-sample} is stronger 
than Lemma 18 in \cite{vinod2022} in the sense
that the original Lemma 18 has 
$\alpha=c/\left (r\sqrt{\log n+\log m+\omega(\log \lambda)}\right )$, 
compared with $\alpha=c/\left (r\sqrt{\log n}\right )$ here.
For the task of learning LTFs, 
if one uses Lemma 18 instead of Lemma \ref{lem:lwe-gaussian-sample} 
and follows the same argument for rest of the proof,
one will still get am $n^{\Omega(1/(\eps\sqrt{\log n})^{0.99})}$ lower bound 
--- compared with the $n^{\Omega(1/(\eps\sqrt{\log n})^{1.99})}$ 
\emph{near-optimal} lower bound we establish here.

Combining the above lemmas and Assumption \ref{asm:LWE-hardness}, 
we establish the proof of Proposition \ref{pro:lwe-short-norm}.

\begin{proof} [Proof of Proposition \ref{pro:lwe-short-norm}]
We provide an efficient reduction from Assumption \ref{asm:LWE-hardness} via
Lemma \ref{lem:lwe-sparse-secret}, 
Lemma \ref{lem:lwe-continuous-noise}, 
Lemma \ref{lem:lwe-continuous-sample} and 
Lemma \ref{lem:lwe-gaussian-sample}.
More precisely, the reduction will follow the following steps:
\begin{enumerate}
\item Let the problem in Assumption \ref{asm:LWE-hardness}
be solving $\lwe(2^{O(l^{\beta'})}, \Z_q^l,\Z_q^l, D^\gaus_{\Z,\sigma'},\mod_q)$ 
with $2^{-O(l^{\beta'})}$ advantage, where $l$ is the dimension.

\item We then use Lemma \ref{lem:lwe-sparse-secret} to reduce to  
solving the problem $\lwe(n^{O(k^{\beta})},\Z_q^n, S_{n,k}, D^\gaus_{\Z,c\sqrt{k}\sigma'},\mod_q)$ 
with $n^{-O(k^{\beta})}$ advantage, 
where 
$c$ is a sufficient large positive universal constant,
$n$ is the dimension
and the secret vector is from the sparse set $S_{n,k}$.

\item The we apply Lemma \ref{lem:lwe-continuous-noise}
and Lemma \ref{lem:lwe-continuous-sample}.
The two lemmas make the sample and noise distributions continuous.
As we argued before, these two lemmas will only blow up the noise scale by a universal constant factor, 
so we reduce to
solving 
$\lwe(n^{O(k^{\beta})},\R_q^n, S_{n,k}, D^\gaus_{\R,c\sqrt{k}\sigma'},\mod_q)$ 
with $n^{-O(k^{\beta})}$ advantage, 
where $c$ is a sufficiently large positive universal constant.

\item To finish the reduction, we apply Lemma \ref{lem:lwe-gaussian-sample} 
which mainly changes the sample distribution from $U(\R_q^n)$ to $D^\gaus_{\R^n,1}$
and reduce to 
solving the problem
$\lwe(n^{O(k^{\beta})},D^\gaus_{\R^n,1}, \mathbb{S}^{n-1}, D^\gaus_{\R,\sigma},\mod_\alpha)$
with $n^{-O(k^{\beta})}$ advantage.
\end{enumerate}

To start the reduction, we need to chose the values for parameters $l,\beta',q,\sigma'$ in the first step.
Let $n,k,\beta,\gamma, \kappa$ be the parameters in the body of Proposition \ref{pro:lwe-short-norm} 
which are the target parameters we want to get after the reduction.
For convenience, we let $\delta>0$ be the constant such that $1-3\delta=\beta$.
Let $\psi$ be the value such that $k=\log^{\psi} n$ ($\psi$ has dependence on $n$ and $k$).
We will chose the following values:
\begin{itemize}
	\item $l=\log^{t} n$, where $t=1+\psi(1-\delta)$;
	\item $\beta'=\frac{1+\gamma (1-2\delta)}{1+\gamma(1-\delta)}$, which is a constant, and $\beta'\in (0,1)$;
	\item $q=k^{\kappa+1}$;
	\item $\sigma'=c\sqrt{l}$, where $c$ is a sufficiently large constant.	
\end{itemize}
We now check validity of the parameters for each step of the reduction:
\begin{enumerate}[leftmargin=*]
\item We first check that the parameters satisfy the requirements in Assumption \ref{asm:LWE-hardness}.
Notice that 
$$q=k^{\kappa+1}=\log^{\psi(\kappa+1)} n=l^{\psi(\kappa+1)/t}\leq l^{\psi(\kappa+1)/(\psi(1-\delta))}
=l^{(\kappa+1)/(1-\delta)}=l^{O(1)}\; .$$

\item We then check the requirements in Lemma \ref{lem:lwe-sparse-secret}. 
We chose the additional parameters as $\lambda=2^{l^{\beta'}}$ and $m=n^{O(k^\beta)}$.
For convenience, we first show that $2^{l^{\beta'}}=n^{\omega(k^\beta)}$.
Notice that
$$2^{l^{\beta'}}=2^{\log^{t\beta'} n}=n^{\log^{t\beta'-1} n}=n^{k^{\frac{t\beta'-1}{\psi}} }\; .$$
Since $k=\log^\psi n$ and $k\geq \log^\gamma n$, it follows that $\psi\geq \gamma$; 
therefore, $\beta'=\frac{1+\gamma (1-2\delta)}{1+\gamma(1-\delta)}\geq \frac{1+\psi (1-2\delta)}{1+\psi(1-\delta)}$.
Plugging this into the above, we get that 
$$2^{l^{\beta'}}\geq n^{k^{\frac{t\frac{1+\psi (1-2\delta)}{1+\psi(1-\delta)}-1}{\psi}} }= n^{k^{1-2\delta}}=n^{\omega(k^\beta)}\; ,$$
where the last equality follows from the fact $\beta=1-3\delta$.
For the requirements, we have: 
\begin{enumerate}[leftmargin=*]
	\item It is immediate that $\log(q)/2^l=O(\log l)/2^l=\negl(\lambda)$ (since $q=l^{O(1)}$ from the last step).

	\item \label{item:step-2-parameter} For the requirement $\sigma'\geq 4\sqrt{\omega(\log \lambda)+\ln n+\ln m}$, 
	since $\sigma'=c\sqrt{l}$, taking squares on both side, it can be rewritten as 		
	$$l=\omega(\log \lambda+\ln n+\ln m) \;.$$ 
	Notice that $\log \lambda =O(l^{\beta'})$, where $\beta'<1$; thus, $l= \omega( \log \lambda)$. 
	Since $l=\log^t n$, where $t=1+\psi(1-\delta)\geq 1+\gamma(1-\delta)$ and $\gamma(1-\delta)$ is a positive constant, 
	we have that $l=\omega(\ln n)$.
	Then, since $2^{l^{\beta'}}=n^{\omega(k^\beta)}$ as shown above, and $m=n^{O(k^\beta)}$,
	we get that $2^{l^{\beta'}}=\omega(m)$; thus, 
	we get $l=\omega(l^{\beta'})=\omega(\log m)$.
	Combining the above gives us that $l=\omega(\log \lambda+\ln n+\ln m)$. 

	\item For the requirement $k\log_2(n/k)\geq (l+1)\log_2(q)+\omega(\log \lambda)$, 
	since $l=\omega(\log \lambda)$ as shown above, 	
	we can rewrite it as $k\log_2(n)-k\log_2(k)\geq 2l\log_2(q)$.
	Since $q=\poly(l)$ from step 1, it therefore suffices to show that
	$k\log n-k\log k= \omega(l\log l)$,
	which is $k\log n\geq cl\log l +k\log k$ for any constant $c$.
	We prove this by analyzing two cases, 
	namely $cl\log l\leq k\log k$ and $cl\log l> k\log k$.
	
	If $cl\log l\leq k\log k$, then since $k<c'n$, where $c'$ is a sufficiently small universal constant,
	we get that $k\log n\geq 2k\log k \geq cl\log l +k\log k$.
	
	If $cl\log l> k\log k$, then it suffices to show that $k\log n=\omega(l\log l)$.
	Notice that 
	$k\log n=\log^{1+\psi} n$
 	and $l\log l=t\log^t n\log\log n $.
	Thus, 
	$$\frac{k\log n}{l \log l}=\frac{\log^{1+\psi-t} n}{t\log\log n}\; .$$
	Notice that $1+\psi-t=\delta\psi\geq \delta\gamma$ 
	(since $k\geq \log^\gamma n$ and $k=\log^\psi n$ implies $\psi\geq\gamma$) 
	is at least a constant; thus, 
	$$\frac{k\log n}{l \log l}
	=\frac{\log^{1+\psi-t} n}{t\log\log n}
	=\frac{\log^{\delta\psi} n}{t\log\log n}
	=\omega\left (\frac{\log^{\delta\psi/2} n}{t}\right )
	=\omega\left (\frac{\log^{\delta\psi/2} n}{1+\psi}\right )\; ,$$
	where the last equality comes from the fact that $t=1+\psi(1-\delta)\leq 1+\psi$.
	Therefore, we just need to show that 
	$\frac{\log^{\delta\psi/2} n}{1+\psi}$ is at least a constant.
	Notice that for any sufficiently large $n$ such that $\log^{\delta/2} n\geq e$, we have that
	$$\log^{\delta\psi/2} n=(\log^{\delta/2} n)^\psi \geq e^\psi\geq 1+\psi \;.$$
	Thus, we have that 
	$$\frac{k\log n}{l \log l}=\omega(1)\; ,$$ 
	which is $k\log n=\omega(l\log l)$.
	
	Therefore, the requirement $k\log_2(n/k)\geq (l+1)\log_2(q)+\omega(\log \lambda)$ 
	is satisfied in both cases.

	\item It only remains to verify the 
	time lower bound of $2^{-O(l^{\beta'})}$ and  
	advantage $2\eps m+\negl(\lambda)$ in Lemma \ref{lem:lwe-sparse-secret}, 
	where $\eps$ is the advantage before the reduction.	
	Notice that since $2^{l^{\beta'}}=n^{\omega(k^\beta)}$, 
	the time lower bound is at least any $n^{O(k^\beta)}$. 
	For the advantage, by taking $\eps=2^{-3(l^{\beta'})}$, we have that
	$$2\eps m+\negl(\lambda)=2^{-3(l^{\beta'})}n^{O(k^\beta)}+\negl(2^{l^{\beta'}})
	\leq 2^{-2(l^{\beta'})}+\negl(2^{l^{\beta'}})= n^{-\omega(k^\beta)} \;,$$
	where the last inequality and equality follows from the statement 
	$2^{l^{\beta'}}=n^{\omega(k^\beta)}$ shown above.
	Thus, there is no $n^{O(k^\beta)}$-time distinguisher for solving 
	$\lwe(n^{O(k^{\beta})},\Z_q^n, S_{n,k}, D^\gaus_{\Z,c\sqrt{k}\sigma'},\mod_q)$ 
	with $n^{-O(k^{\beta})}$ advantage. 
	\end{enumerate}

\item We then check the parameter requirements in Lemma \ref{lem:lwe-continuous-noise}
and Lemma \ref{lem:lwe-continuous-sample}. 
Note that it suffices to check that 
$c\sqrt{k}\sigma'\geq 3r\sqrt{\ln n+\ln m+\omega(\log \lambda)}$ for sufficiently large constant $c$.
Since $r=\sqrt{k}$ from its definition 
and we have already shown that $\sigma'\geq 4\sqrt{\omega(\log \lambda)+\ln n+\ln m}$
in Step \ref{item:step-2-parameter}, this inequality holds.

Then it only remains to verify the time lower bound and advantage.
The time lower bound is $n^{ck^{\beta}}-\poly(m,n,\log(q),\log(c\sqrt{k}\sigma'))$.
Since $m=n^{O(k^{\beta})}$, 
$\log (q)=\log (k^{\kappa+1})=O(\log k)$,
and $\log(c\sqrt{k}\sigma')=O(\log k+\log l)= O(\log^{1+\psi} n)=O(k\log n)$,
by choosing $c$ to be a sufficiently large constant, 
the above lower bound is any $n^{O(k^{\beta})}$.
Similarly, the advantage is any $n^{-O(k^{\beta})}$.
Thus, there is no $n^{O(k^\beta)}$-time distinguisher for solving the problem 
$\lwe(n^{O(k^{\beta})},\R_q^n, S_{n,k}, D^\gaus_{\R,c\sqrt{k}\sigma'},\mod_q)$ 
with $n^{-O(k^{\beta})}$ advantage.

\item  After applying Lemma \ref{lem:lwe-gaussian-sample}, we get that 
there is no $n^{O(k^\beta)}$-time distinguisher for solving the problem 
$\lwe(m',D^\gaus_{\R^n,1}, S_{n,k}/\sqrt{k}, D^\gaus_{\R,c\alpha\sqrt{k}\sigma'/q},\mod_\alpha)$
with $n^{-O(k^{\beta})}$ advantage,
where 
$\alpha=c/\left (\sqrt{k\log n}\right)$, $m'=cn^{O(k^\beta)}$,
and $c>0$ is a sufficiently small universal constant.
We just need to check that it matches the values of $\sigma, m, T$ 
in the body of Proposition \ref{pro:lwe-short-norm}.
For the noise scale $\sigma$, we have
$$c\alpha\sqrt{k}\sigma'/q=c'\sqrt{l}/(\sqrt{\log n} q) =c'\log n^{\psi(1-\delta)/2}/q \leq c'k^{1/2}/k^{\kappa+1}=o( k^{-\kappa})=o( \sigma)\; ,$$
where the last inequality follows from $k=\log^\psi n$.
For the number of samples, we have that
$m'=cn^{c'(k^\beta)}$ which is any $n^{O(k^\beta)}$ 
by choosing $c'$ to be sufficiently large.
For the parameter $T$,
we have that $\alpha=c/\left (\sqrt{k\log n}\right)=T$.
Then, the only remaining difference is that the secret vector 
distribution is $U(S_{n,k}/\sqrt{k})$ instead of 
$U(\mathbb{S}^{n-1})$. 
The catch here is that we can do a random rotation 
on all the samples and this makes the secret vector also randomly rotated
and gives the $U(\mathbb{S}^{n-1})$ distribution we want.
Therefore, there is no $n^{O(k^\beta)}$-time distinguisher for solving the problem 
$\lwe(n^{O(k^\beta)},D^\gaus_{\R^n,1}, \mathbb{S}^{n-1}, D^\gaus_{\R,\sigma},\mod_T)$
with $n^{-O(k^{\beta})}$ advantage.
\end{enumerate}
This proves Proposition \ref{pro:lwe-short-norm}.

\end{proof}

\end{document}